\documentclass{article}

\usepackage{microtype}
\usepackage{graphicx}
\usepackage{subcaption}
\usepackage{booktabs} %

\usepackage{multirow}
\usepackage{pifont}
\usepackage[table]{xcolor}

\usepackage{hyperref}

\usepackage[preprint]{icml2026}

\usepackage{amsmath}
\usepackage{amssymb}
\usepackage{mathtools}
\usepackage{amsthm}

\usepackage[capitalize,noabbrev]{cleveref}

\theoremstyle{plain}
\newtheorem{theorem}{Theorem}[section]

\newtheorem{lemma}[theorem]{Lemma}

\theoremstyle{definition}

\newtheorem{assumption}[theorem]{Assumption}
\theoremstyle{remark}
\newtheorem{remark}[theorem]{Remark}

\usepackage[disable,textsize=tiny]{todonotes}

\usepackage{enumitem}

\icmltitlerunning{Scheduling LLM Inference with Uncertainty-Aware Output Length Predictions}

\begin{document}

\twocolumn[
  \icmltitle{Scheduling LLM Inference with Uncertainty-Aware Output Length Predictions}

  \icmlsetsymbol{equal}{*}

  \begin{icmlauthorlist}
  \icmlauthor{Haoyu Zheng}{whu}
  \icmlauthor{Yongqiang Zhang}{dm}
  \icmlauthor{Fangcheng Fu}{sjtu}
  \icmlauthor{Xiaokai Zhou}{whu}
  \icmlauthor{Hao Luo}{whu}
  \icmlauthor{Hongchao Zhu}{dm}
  \icmlauthor{Yuanyuan Zhu}{whu}
  \icmlauthor{Hao Wang}{whu}
  \icmlauthor{Xiao Yan}{imai}
  \icmlauthor{Jiawei Jiang}{whu,ccnu}
\end{icmlauthorlist}

\icmlaffiliation{whu}{School of Computer Science, Wuhan University, Wuhan, China}
\icmlaffiliation{ccnu}{School of Computer Science, Central China Normal University, Wuhan, China}
\icmlaffiliation{dm}{Dameng Database Co., Ltd., Wuhan, China}
\icmlaffiliation{sjtu}{School of Artificial Intelligence, Shanghai Jiao Tong University, Shanghai, China}
\icmlaffiliation{imai}{Institute for Math and AI, Wuhan University, Wuhan, China}

\icmlcorrespondingauthor{Xiao Yan}{yanxiaosunny@whu.edu.cn}
\icmlcorrespondingauthor{Jiawei Jiang}{jiawei.jiang@whu.edu.cn}

    \icmlkeywords{Machine Learning, LLM Inference Scheduling}

  \vskip 0.3in
]

\newcommand{\stitle}[1]{\vspace{0.1em}\noindent{\bf #1.\/}}
\newcommand{\squishlist}{
	\begin{list}{$\bullet$}
		{ \setlength{\itemsep}{1pt}
			\setlength{\parsep}{1pt}
			\setlength{\topsep}{2.5pt}
			\setlength{\partopsep}{0.5pt}
			\setlength{\leftmargin}{1em}
			\setlength{\labelwidth}{1em}
			\setlength{\labelsep}{0.6em}
		}
	}
\newcommand{\squishend}{
	\end{list}
}
\newcommand{\squishenumatelist}{
  \begin{enumerate}[
    itemsep=1pt,
    parsep=1pt,
    topsep=2.5pt,
    partopsep=0.5pt,
    leftmargin=1.5em,
    labelwidth=1em,
    labelsep=0.6em
  ]
}
\newcommand{\squishenumatend}{
  \end{enumerate}
}

\newcommand{\xiao}[1]{\textcolor{green}{xiao:#1}}

\printAffiliationsAndNotice{}  %

\begin{abstract}

To schedule LLM inference, the \textit{shortest job first} (SJF) principle is favorable by prioritizing requests with short output lengths to avoid head-of-line (HOL) blocking. Existing methods usually predict a single output length for each request to facilitate scheduling. We argue that such a \textit{point estimate} does not match the \textit{stochastic} decoding process of LLM inference, where output length is \textit{uncertain} by nature and determined by when the end-of-sequence (EOS) token is sampled. Hence, the output length of each request should be fitted with a distribution rather than a single value. With an in-depth analysis of empirical data and the stochastic decoding process, we observe that  output length follows a heavy-tailed distribution and can be fitted with the log-t distribution. On this basis, we propose a simple metric called Tail Inflated Expectation (TIE) to replace the output length in SJF scheduling, which adjusts the expectation of a log-t distribution with its tail probabilities to account for the risk that a request generates long outputs. To evaluate our TIE scheduler, we compare it with three strong baselines, and the results show that TIE reduces the per-token latency by $2.31\times$ for online inference and improves throughput by $1.42\times$ for offline data generation.

\end{abstract}

\section{Introduction}
\label{sec:Introduction}

Large Language Models (LLMs) underlie many artificial intelligence applications, such as chatbots, content generation, scientific reasoning, and beyond. Popular LLM services such as ChatGPT~\cite{openai2022chatgpt}, Gemini~\cite{google2023gemini}, and Claude~\cite{anthropic2023claude} process billions of inference requests on a daily basis~\cite{chatterji2025people}. As such, a core problem is how to schedule these inference requests for execution, and the scheduling goals differ according to the target scenarios. In particular, \textit{online serving} (e.g., chatbots and coding assistants) prioritizes latency metrics such as Time-To-First-Token (TTFT) and Per-token Latency (PTLA) for good quality of service (QoS), while \textit{offline processing} (e.g., synthetic data generation (SDG) and data cleaning) emphasizes throughput by maximizing the number of processed requests over a time window.

\begin{figure}[!t]
  \begin{center}
    \centerline{\includegraphics[width=0.85\columnwidth]{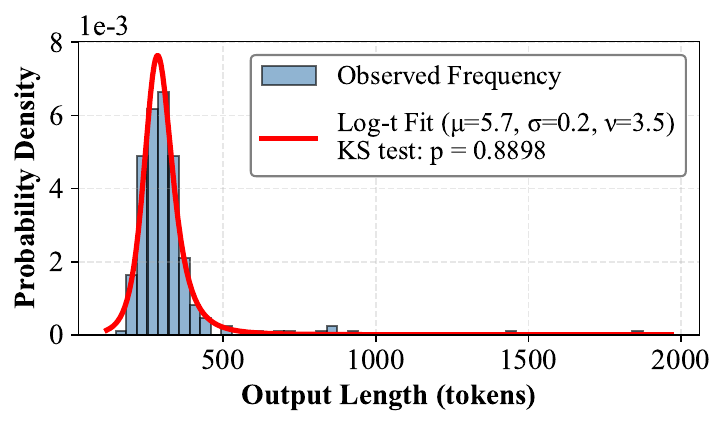}}
    \caption{Output length distribution for the first prompt in LMSYS-Chat-1M dataset~\cite{zheng2023lmsys}. Bars are the output lengths in 256 generations, while red curve is the fitted log-t distribution.}
    \label{fig:prompt0_distribution}
  \end{center}
\end{figure}

It has been observed that output length predictions can have large errors~\cite{chen2025adaptivelyrobustllminference}, and some methods adopt iterative prediction and preemptive scheduling to tackle these errors. For example, TRAIL~\cite{dont-stop-shahoutdon} employs a lightweight linear classifier to predict output length after generating \textit{each} token, preempting requests predicted to generate long outputs. ELIS~\cite{choi2025elisefficientllmiterative} adopts a similar approach but uses an external encoder and extends the prediction interval to 50 tokens. The overheads of these methods can be high due to \textit{frequent predictions and possible preemptions}~\cite{vllm-docs-optimization,10.5555/3691938.3691949}. The limitations of existing methods motivate us to ask:
\begin{quote}
\textit{What are the fundamental difficulties in making accurate output length predictions? How to make quality predictions to facilitate scheduling? }
\end{quote}

\stitle{Our Insights and Solution} Our key insight is that existing \textit{point estimates} (i.e., predicting an output length for each request) do not match the stochastic decoding process of LLM inference. Specifically, LLM inference randomly samples a token (with probabilities determined by the previous tokens) in each decoding step, and the output length is determined by when the end-of-sequence (EOS) token is sampled. Therefore, the output length of each request is uncertain by nature, and a request can have different output lengths when executed multiple times, as shown in Figure~\ref{fig:prompt0_distribution}. To account for the inherent randomness in the decoding process, the output length of each request should be fitted with a distribution rather than a single value.

By analyzing real requests, we observe that the output lengths follow a  \textit{heavy-tailed} distribution that can be effectively fitted using the log-t distribution~\cite{logt-cassidy2010pricing}. We also prove this fact under some mild assumptions on the decoding process of LLM inference. Take Figure~\ref{fig:prompt0_distribution} for instance, the p-value of the log-t distribution for the Kolmogorov-Smirnov (KS) test \cite{ks-massey1951kolmogorov} is 0.8898, suggesting a high quality fitting. 
To predict parameters for the log-t distribution of each request, we employ a simple model, which uses fine-tuned DeBERTa-v3-base~\cite{he2021deberta,he2023debertav3} to extract request semantics and feed the resultant embedding to an MLP. 

To utilize the fitted distribution for scheduling, we devise a metric called \textit{Tail Inflated Expectation (TIE)}. The rationale is that scheduling should account for the risks that requests may generate long outputs and cause HOL blocking, and thus requests with heavy tails should be penalized. Therefore, TIE adjusts the expectation (i.e., $\mathbb{E}\left[X\right]$) of the log-t distribution with its tail expectation (i.e., $\mathbb{E}\left[X \mid X \geq \text{VaR}_\alpha^X\right]$), where $\text{VaR}_\alpha^X$ (Value at Risk) is a tail percentile, to account for the risks. The resulting TIE scheduler is simple to implement as it only replaces output length with TIE for SJF.

We comprehensively evaluate the TIE scheduler for both online and offline inference scenarios and experiment with multiple datasets and models. The results show that compared with the best-performing baseline, TIE reduces the per-token latency by $2.31\times$ for online chatbot and improves throughput by $1.42\times$ for offline SDG tasks. Moreover, micro experiments also suggest that adjusting the expectation with tail in TIE improves performance, and the TIE scheduler generalizes well across different datasets and models.

\stitle{Contributions} We make the following contributions:
\squishlist
    \item We observe that existing methods make point estimates for request output lengths, and we argue that this does not match the random token sampling of LLM decoding.
    \item To account for the inherent uncertainty in output length, we propose to fit a log-t distribution instead of a single value. Besides scheduling, our methodology may also benefit other use cases that involve output lengths, such as KV cache management and cost estimation.     
    
    \item Based on the fitted log-t distributions, we design the TIE scheduler\footnote{\url{https://github.com/Hyzheng-code/TIE}}, which is simple to implement and provides strong performance in the experiments.
\squishend

\section{Related Work}

\stitle{LLM Serving Systems} 
As discussed above, existing LLM systems typically employ FCFS scheduling, which suffers from HOL blocking.
Continuous Batching~\cite{yu2022orca} is a critical advancement that prevents long-running requests from blocking entire batches. However, queue-level blocking remains an open challenge: long-running requests can still block shorter ones waiting in the queue, leading to increased average latency and reduced system throughput.

\stitle{Scheduling for LLM Inference}
To mitigate queue-level blocking, recent studies have explored SJF-based strategies.
Beyond SSJF and LTR discussed in Section \ref{sec:Introduction}, several other prediction-based scheduling methods have been proposed \cite{s3,zheng2023response}.
However, as discussed earlier, the inherent uncertainty in LLM outputs makes accurate length prediction difficult.

To mitigate this uncertainty, a potential optimization pathway is preemptive scheduling based on iterative prediction~\cite{dont-stop-shahoutdon,choi2025elisefficientllmiterative, wu2023fast}. However, as discussed in Section \ref{sec:Introduction}, it incurs substantial overhead.
In contrast, our proposed scheduling approach accounts for output uncertainty while avoiding the overhead of frequent re-prediction and preemption.

\stitle{LLM Output Length Prediction} 
Predicting the output length of LLMs has applications beyond request scheduling, including KV cache management \cite{apple2024kv_prediction} and cost estimation \cite{piotrowski2025will}.
In addition to prompt-based length predictors such as SSJF and LTR, other methods predict lengths from embeddings and refine predictions iteratively during generation. 
TRAIL \cite{dont-stop-shahoutdon} leverages intermediate layer embeddings as input to a linear classifier for predicting output length. Building upon TRAIL, LGR \cite{piotrowski2025will} models the embeddings from multiple intermediate layers as graph nodes and employs a graph convolutional network to capture inter-layer dependencies for output length prediction.
The core limitation of these methods is that point estimates inherently clash with the stochastic nature of autoregressive decoding.

\stitle{Stochastic Scheduling}
TIE fundamentally falls within the scope of \emph{stochastic scheduling}, where jobs have random processing times drawn from \emph{known} distributions and the Shortest Expected Processing Time (SEPT) policy~\cite{weber1983scheduling} minimizes expected flow time. However, several gaps prevent stochastic-scheduling theory from being directly applied to LLM serving~\cite{mitzenmacher2025queueing}: (i) classical analyses assume the job-size distribution is \emph{fully known}, whereas LLM output lengths must be predicted from the prompt; (ii) most queueing-theoretic results build on the M/G/1 single-server model, while modern LLM serving uses continuous batching; and (iii) practical factors such as prediction overhead, KV-cache contention, and preemption costs further deviate from idealized theoretical settings. Our work serves as a practical step toward bridging this gap by fitting a distribution to each request's output length and incorporating tail-risk-aware scheduling.

\section{Predicting Uncertain Output Length}
\label{sec:output_length_prediction}

The prediction of output length distributions consists of two components: \textit{(1)} selecting and fitting appropriate distributions to model the output length, \textit{(2)} training a predictor to estimate the distribution parameters of output length.

\subsection{Output Length Modeling via Distribution Fitting}
\label{sec:distribution}

LLM inference is inherently \textit{stochastic}. At each decoding step, the next output token is sampled from a probability distribution conditioned on the preceding context.
Therefore, \textit{the same input prompt may yield different outputs}.

\stitle{Distribution Selection}
To investigate the distribution of output lengths for identical prompts, we sample 1K prompts from the LMSYS-Chat-1M dataset and generate 100 responses for each prompt. 
We observe that these distributions exhibit clear \textit{heavy-tailed} characteristics. Specifically, across the 1K distributions, the average \textit{Skewness} $=$ 3.10, and the average \textit{Coefficient of Variation (CV)} $=$ 1.09, with 78.6\% of them $>$ 1. The top 10\% of output lengths account for 35.7\% of the total length, while the average P90/P50 and P99/P50 ratios reach 4.62 and 10.77, respectively.

We now provide a theoretical analysis of this \textit{heavy-tailed} behavior. During LLM inference, tokens are generated autoregressively until the end-of-sequence (EOS) token is produced. Thus, the output length can be expressed as:
\begin{equation}
    L = \min\{t \geq 1 : x_t = \texttt{EOS}\}.
\end{equation}

A key observation is that termination probabilities vary substantially across different generation trajectories, with some trajectories exhibiting low termination rates. For instance, when generating a JSON object, the model maintains a very low termination probability between `\texttt{\{}' and `\texttt{\}}' to ensure output completeness. To formally characterize this observation, we introduce the following assumption: 
\begin{assumption}
\label{asm:effective_rate}
The termination rate across generation trajectories follows a distribution with density $f$ satisfying: \mbox{$f(p) \sim c \cdot p^{\alpha-1} \quad \text{as } p \to 0^+$} for some constants $\alpha, c > 0$.
\end{assumption}
\begin{theorem}
\label{thm:heavy_tail_main}
Under Assumption~\ref{asm:effective_rate}, the tail probability of output length follows a \textbf{power-law decay}:
\begin{equation}
    P(L > n) \sim \frac{c \cdot \Gamma(\alpha)}{n^\alpha} \quad \text{as } n \to \infty.
\end{equation}

\end{theorem}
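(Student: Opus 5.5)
The plan is to model $L$ as a mixture of geometric random variables. Conditional on a trajectory with effective termination rate $p\in(0,1]$, we treat EOS as being sampled independently with probability $p$ at each step. Then $P(L>n\mid p)=(1-p)^n$, and integrating against the density $f$ of Assumption~\ref{asm:effective_rate} gives
\begin{equation*}
P(L>n)=\int_0^1 (1-p)^n f(p)\,dp .
\end{equation*}
The theorem then reduces to a Laplace-type (Tauberian) asymptotic for this integral as $n\to\infty$. The key point is that the integral is dominated by a shrinking neighborhood of $p=0$ of width about $1/n$, which is exactly where Assumption~\ref{asm:effective_rate} describes $f$.

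First, fix a small $\delta\in(0,1)$ and split the integral at $\delta$. On $[\delta,1]$ the integrand is at most $(1-\delta)^n f(p)$, so this piece is at most $(1-\delta)^n$. That decays exponentially and is $o(n^{-\alpha})$.

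Second, on $[0,\delta]$ choose $\delta$ so that $(1-\varepsilon)c\,p^{\alpha-1}\le f(p)\le(1+\varepsilon)c\,p^{\alpha-1}$ there. This reduces the problem to the asymptotics of $I_n(\delta)=\int_0^\delta (1-p)^n p^{\alpha-1}\,dp$. Substitute $p=u/n$ to get
\begin{equation*}
n^\alpha I_n(\delta)=\int_0^{n\delta}\Bigl(1-\tfrac{u}{n}\Bigr)^n u^{\alpha-1}\,du .
\end{equation*}
Pointwise, $(1-u/n)^n\to e^{-u}$. The bound $(1-u/n)^n\le e^{-u}$ supplies the integrable dominating function $e^{-u}u^{\alpha-1}$, using $\alpha>0$ for integrability at $0$. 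Dominated convergence then gives $n^\alpha I_n(\delta)\to\int_0^\infty e^{-u}u^{\alpha-1}du=\Gamma(\alpha)$.

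Combining the two steps:
\begin{equation*}
(1-\varepsilon)c\Gamma(\alpha)\le\liminf_{n\to\infty} n^\alpha P(L>n)\le\limsup_{n\to\infty} n^\alpha P(L>n)\le(1+\varepsilon)c\Gamma(\alpha).
\end{equation*}
Letting $\varepsilon\to0$ yields $P(L>n)\sim c\,\Gamma(\alpha)/n^\alpha$. An equivalent route is to note that the integral is the Beta function: with $f(p)=c\,p^{\alpha-1}$ exactly, it equals $c\,B(\alpha,n+1)=c\,\Gamma(\alpha)\Gamma(n+1)/\Gamma(n+\alpha+1)$. Stirling's formula then gives $c\,\Gamma(\alpha)n^{-\alpha}$. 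I would mention this as a sanity check on the constant.

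The main obstacle is not the analysis but making precise what "termination rate across trajectories" means. Assumption~\ref{asm:effective_rate} has to be read as saying that $L$, conditional on a latent per-trajectory rate $p$, is geometric with parameter $p$. The proof should state this conditional-geometric model explicitly as the interpretation. One further requirement is that $f$ be a genuine probability density on $(0,1]$, so that the tail piece is bounded by $(1-\delta)^n$. With that in place, the only delicate analytic step is justifying the limit interchange, which the domination bound handles.
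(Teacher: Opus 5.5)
Your proof is correct and follows essentially the same route as the paper: the mixture-of-geometrics representation $P(L>n)=\int_0^1(1-p)^n f(p)\,dp$, a split at $\delta$ with the exponential bound $(1-\delta)^n$ on the far piece, and on the near piece the substitution $u=np$ followed by dominated convergence with a dominating function proportional to $u^{\alpha-1}e^{-u}$. The only difference is cosmetic: you sandwich $f$ between $(1\pm\varepsilon)c\,p^{\alpha-1}$ and apply dominated convergence to the pure power, whereas the paper keeps $f$ inside the integrand, bounds it by $M p^{\alpha-1}$, and uses the pointwise limit $n^{\alpha-1}f(u/n)\to c\,u^{\alpha-1}$ directly.
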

The detailed proof is provided in Appendix~\ref{app:heavy_tail}. 
The power-law decay established in Theorem~\ref{thm:heavy_tail_main} is widely 
recognized as a sufficient condition for heavy-tailedness~\cite{clauset2009power, 
foss2011introduction}, explaining the high skewness and extreme quantile 
ratios observed in our empirical analysis.

To identify suitable distribution families, we fit several common heavy-tailed distributions to the empirical distributions obtained above.
We assess the goodness of fit using the Kolmogorov-Smirnov (KS) test, where $p > 0.05$ is conventionally considered to indicate adequate fit~\cite{fisher1925statistical,d1986goodness,alasmar2021internet}.

\begin{table}[t]
\centering
\caption{Goodness of fit for common heavy-tailed distributions (1,000 prompts, 100 generations each). KS pass rate: the percentage of prompts where the fit passes the KS test ($p > 0.05$).}
\label{tab:distribution_comparison}
\small
\begin{tabular}{lcc}
\toprule
\textbf{Distribution} & \textbf{\# Parameters} & \textbf{KS Pass Rate} \\
\midrule
Log-t & 3 & 93.1\% \\
Log-t ($\nu = 3.5$) & 2 & 90.6\% \\
Log-normal & 2 & 60.3\% \\
Gamma & 2 & 32.0\% \\
Weibull & 2 & 19.0\% \\
Exponential & 1 & 10.7\% \\
\bottomrule
\end{tabular}
\end{table}

\begin{figure*}[t]
    \centering
    \includegraphics[width=\textwidth]{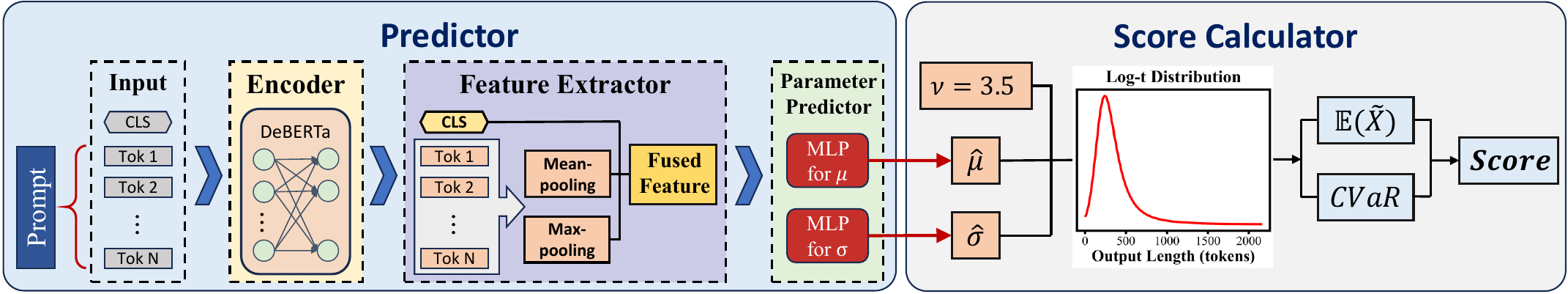}

    \caption{Overview of the overall scoring pipeline. The prompt is prepended with a CLS token and encoded by DeBERTa. A multi-pooling strategy aggregates the CLS token, mean-pooled, and max-pooled representations. Two separate prediction heads (MLPs) predict $\hat{\mu}$ and $\hat{\sigma}$, which together with a fixed $\nu = 3.5$ are used to construct the log-t distribution. The final score is computed from $\mathbb{E}(\tilde{X})$ and CVaR.}

    \label{fig:overall_pipeline}
\end{figure*}

Table~\ref{tab:distribution_comparison} summarizes the parameter count and fitting performance for these distributions.
The three-parameter log-t distribution achieves the highest pass rate (93.1\%). 
Since the log-t distribution provides a dedicated parameter $\nu$ to control the tail behavior, we further evaluate two-parameter variants with different fixed $\nu$ values, among which $\nu=3.5$ yields the best performance (90.6\%; detailed results in Appendix~\ref{app:nu_value}). 
While distribution with more parameters can provide more flexible fits, it also increases the complexity of the prediction model, incurring higher overhead during both training and inference.
Balancing efficiency and accuracy, we adopt the variant with fixed $\nu=3.5$ for subsequent experiments.
We present ablation studies comparing end-to-end performance across these distributions in Section~\ref{sec:ablation}.

\stitle{Distribution Fitting}
We focus on the log-t distribution, defined as follows: if $Y \sim t(\nu)$ is a standard Student's t-distribution \cite{student1908probable} with $\nu$ degrees of freedom, then $X = \exp(\mu + \sigma Y)$ follows a log-t distribution, denoted $X \sim \text{Log-t}(\mu, \sigma, \nu)$. By definition, the probability density function (PDF) of the standard t-distribution is:
\begin{equation}
t_\nu(y) = \frac{\Gamma\left(\frac{\nu+1}{2}\right)}{\sqrt{\nu\pi}\,\Gamma\left(\frac{\nu}{2}\right)} \left(1 + \frac{y^2}{\nu}\right)^{-\frac{\nu+1}{2}}
\label{eq:t_density}
\end{equation}

Accordingly, the PDF of the log-t distribution is given by:
\begin{equation}
f(x \mid \mu, \sigma, \nu) = \frac{1}{\sigma x} \cdot t_\nu\left(\frac{\ln x - \mu}{\sigma}\right)
\label{eq:log_t_density}
\end{equation}

We fit distributions via Maximum Likelihood Estimation (MLE). Given observed data $\{x_1, x_2, \ldots, x_n\}$, the log-likelihood function for the log-t distribution is:
\begin{equation}
\ell(\mu, \sigma, \nu) = \sum_{i=1}^{n} \left[\ln t_\nu\left(\frac{\ln x_i - \mu}{\sigma}\right) - \ln \sigma - \ln x_i\right]
\end{equation}

With $\nu=3.5$, the parameters $\mu$ and $\sigma$ are jointly estimated via the L-BFGS-B optimizer. The optimization objective is:
\begin{equation}
\hat{\mu}, \hat{\sigma} = \arg\max_{\mu \in \mathbb{R}, \sigma > 0} \ell(\mu, \sigma, \nu=3.5)
\label{eq:log_t_fit}
\end{equation}

We fit the observed output length distribution for each prompt using Eq.~\eqref{eq:log_t_fit} to obtain the distribution parameters.

\vspace{-5pt}
\subsection{Prediction Model for Distribution Parameters}

Based on this, we develop a prediction model to estimate the distribution parameters $(\mu, \sigma)$ for each incoming prompt.

\stitle{Model Architecture}
As shown in Figure~\ref{fig:overall_pipeline}, our overall pipeline consists of a predictor (left) and a score calculator (right). The predictor comprises three main components:

\textbf{(1) Encoder:} We employ DeBERTa-v3-base (86M backbone) to encode prompts into contextualized embeddings. We select it for its high performance on semantic understanding tasks and efficiency in fine-tuning and inference.

\textbf{(2) Feature Extractor:} To capture multi-level semantic information, we employ a \textit{multi-pooling} fusion strategy that combines the \texttt{[CLS]} token, \textit{mean-pooled}, and \textit{max-pooled} representations. 
Specifically, the \texttt{[CLS]} token encodes global semantics, mean pooling captures average contextual information, and max pooling highlights salient features.

\textbf{(3) Parameter Predictor:} 
We design separate prediction heads for $\mu$ and $\sigma$. Each head consists of an MLP with three hidden layers (256, 256, 128) and dropout for regularization. 
Since the empirical distribution of $\sigma$ is right-skewed, we predict in a transformed space: $\tilde{\sigma} = \log(1 + \sigma)$.
\stitle{Training Strategy}
We perform z-score normalization on $\mu$ and $\tilde{\sigma}$, and train the model 
using MSE loss with a \textit{two-stage} strategy: first optimizing all parameters, then \textit{freezing} the encoder and fine-tuning only the prediction heads. This preserves the generalization of the pre-trained encoder. 
Details are available in our code.
The trained model achieves $R^2$ values of 0.82 and 0.76 for $\mu$ and $\sigma$, respectively.

\section{Scheduling with Tail Inflated Expectation}
\label{sec:scheduling_strategy}
\label{sec:scheduling}

With the predictor established, we now describe how to leverage the predicted distribution for scheduling.

In stochastic scheduling theory, where job execution times follow known distributions, Shortest Expected Processing Time (SEPT) is a classical strategy~\cite{weber1983scheduling}. However, applying SEPT to our setting faces the following challenges:

\squishenumatelist
    \item While the log-t distribution provides a reasonable fit, it cannot perfectly capture the true distribution.
    \item The distribution parameters are estimated from the predictor, thereby inevitably containing errors.
    \item Requests predicted as long may suffer from starvation.
\squishenumatend
The first two issues highlight an unavoidable discrepancy between the predicted and actual output lengths. 
In particular, when a genuinely long request is mispredicted as short, it blocks subsequent requests in the queue. This problem is further exacerbated by the fact that LLM outputs naturally exhibit heavy-tailed behavior. 
These observations motivate us to \textit{explicitly quantify and manage tail risk in scheduling}.

\stitle{Risk-sensitive Scheduling}
We adopt Conditional Value at Risk (CVaR)~\cite{rockafellar2000optimization} to quantify the tail risk of requests. Intuitively, CVaR represents the expected value conditioned on the tail of a distribution. Formally, for a random variable $X$ and confidence level $\alpha \in (0,1)$:
\begin{equation}
    \text{CVaR}_\alpha(X) = \mathbb{E}\left[X \mid X \geq \text{VaR}_\alpha(X)\right]
\end{equation}

where $\text{VaR}_\alpha(X)$ (Value at Risk) 
denotes the $\alpha$-quantile of $X$. Compared to single-point metrics such as P90, CVaR better characterizes the tail behavior.

Recall that we model output length using the log-t distribution. 
In practice, LLM generation is forcibly terminated once the output length reaches the \texttt{max\_tokens} limit. To account for this, we \textit{censor} the predicted distribution at $x_{\max} = \texttt{max\_tokens}$ (i.e., $\tilde{X} = \min(X, x_{\max})$).
For $X \sim \text{Log-t}(\mu, \sigma, \nu)$, define the partial expectation function:
\begin{equation}
    \Psi(y) = \int_{-\infty}^{y} \exp(\mu + \sigma s) \cdot t_\nu(s) \, ds,
\end{equation}

which represents the cumulative contribution to the expectation of $X$ over the region $\{Y \leq y\}$.
The censored expectation and CVaR at confidence level $\alpha$ are then given by (derivations in Appendix~\ref{app:censored_derivations}):
\begin{equation}
    \mathbb{E}[\tilde{X}] = \Psi(y_{\max}) + x_{\max} \cdot [1 - T_\nu(y_{\max})],
    \label{eq:truncated-ex}
\end{equation}
\begin{equation}
    \text{CVaR}_\alpha[\tilde{X}] = \frac{\Psi(y_{\max}) - \Psi(y_\alpha) + x_{\max} \cdot [1 - T_\nu(y_{\max})]}{1-\alpha},
    \label{eq:truncated-cvar}
\end{equation}
where $y_{\max} = (\ln x_{\max} - \mu)/\sigma$, $y_\alpha = T_\nu^{-1}(\alpha)$ is the $\alpha$-quantile, $t_\nu(\cdot)$ is the PDF of standard t-distribution (defined in Eq.~\eqref{eq:t_density}), and $T_\nu(\cdot)$ denotes its CDF.
For scheduling efficiency, we employ \textit{Monte Carlo sampling} with 10k samples to evaluate these metrics instead of numerical integration.

For each request, let $\tilde{X}$ denote the censored predicted output length distribution, we compute a scheduling score as:
\begin{equation}
Score = \mathbb{E}[\tilde{X}] + \beta \cdot \text{CVaR}_\alpha[\tilde{X}]
\label{equ:score}
\end{equation}

We set $\alpha = 0.9$ to capture the top 10\% tail behavior. 
Coefficient $\beta$ controls risk sensitivity and is \textit{adaptively} adjusted based on system pressure. Higher pressure means that scheduling a long request would block more subsequent requests for longer periods, thus requiring a more conservative strategy (i.e., a larger $\beta$).
Specifically, we measure the system pressure using the waiting queue length $L_q$ and the configured maximum batch size $B$, and adjust $\beta$ within [0.1,0.5] for stability, that is:
\begin{equation}
    \beta = \min\left(0.5, \max\left(0.1, \frac{0.1 \cdot L_q}{B}\right)\right)
\label{equ:beta}
\end{equation}

\stitle{Starvation Prevention}
A well-known challenge of SJF-based scheduling is that under high load, long requests tend to be starved by short ones, resulting in excessively long waiting times and compromising scheduling fairness. 
To mitigate this issue, we introduce a waiting-time decay mechanism that periodically adjusts the scheduling score as $Score' = Score \cdot \gamma^{t_w / \tau}$,
where  $t_w$ is the waiting time of the request, $\gamma \in (0, 1)$ is the decay factor, and $\tau$ is the decay interval. We set $\gamma = 0.9$ and $\tau = 30s$ by default. This exponential decay ensures that long-waiting requests are gradually prioritized, preventing starvation without disrupting the overall scheduling scheme.

\section{Implementation and Optimizations}

We implement our scheduling strategy on vLLM 0.11.1 \cite{vllm-kwon2023efficient}. In addition, we optimize the deployment to improve the efficiency of prediction and scheduling.

\stitle{Challenge}
Existing prediction-based scheduling methods typically employ \textit{synchronous} prediction \cite{ssjf-aiops2024qiu,dont-stop-shahoutdon}, where requests are predicted and sorted by the results before enqueuing. Since requests are blocked during prediction, waiting to form a batch would introduce unacceptable latency. As a result, requests are typically predicted \textit{one at a time}, forgoing the throughput benefits of batching.
This is particularly inefficient in the prevalent continuous batching scenario: (1) under low load, continuous batching allows new requests to join the running batch immediately, but synchronous prediction introduces unnecessary blocking; (2) under high load, many requests accumulate while waiting for prediction, and unbatched prediction cannot efficiently identify the shortest request.

To address these issues, we design an \textit{asynchronous} prediction mechanism along with dynamic batching.

\stitle{Asynchronous Prediction} We decouple the scheduler into a \textit{main thread} and a \textit{prediction thread}. 
The prediction thread is responsible for efficiently predicting pending requests without blocking request enqueuing or the main thread's processing. 
The main thread, on the other hand, selects a request from the waiting queue to fill a vacant slot in the running batch whenever one becomes available.

\begin{figure}[t]
  \begin{center}
    \centerline{\includegraphics[width=0.9\columnwidth]{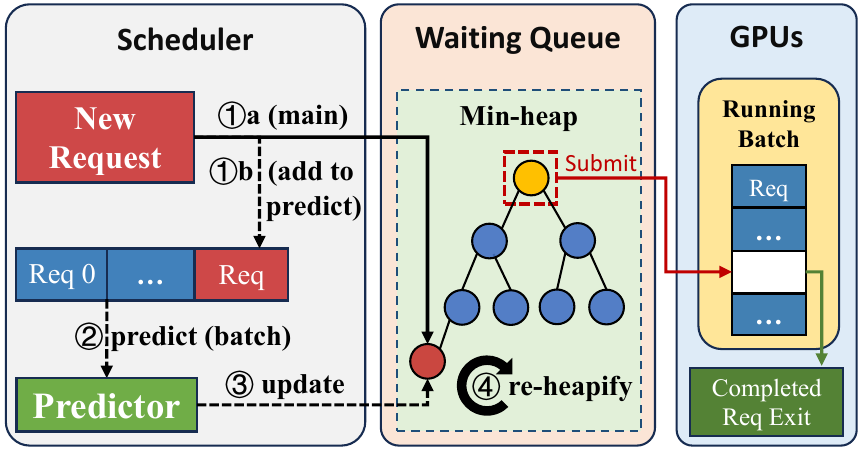}}
    \caption{Scheduling workflow for new requests. Solid arrows indicate the main workflow, while dashed arrows represent the asynchronous prediction workflow.}
    \label{fig:scheduling_workflow}
  \end{center}
\end{figure}

\begin{figure*}[t]
    \centering
    \begin{subfigure}{0.6\textwidth}
        \centering
        \includegraphics[width=\textwidth]{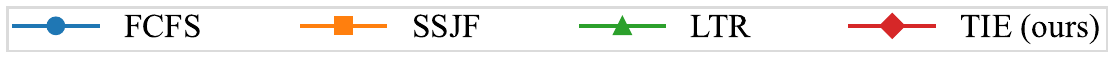}
    \end{subfigure}
    
    \centering
    \begin{subfigure}{0.24\textwidth}
        \centering
        \includegraphics[width=\textwidth]{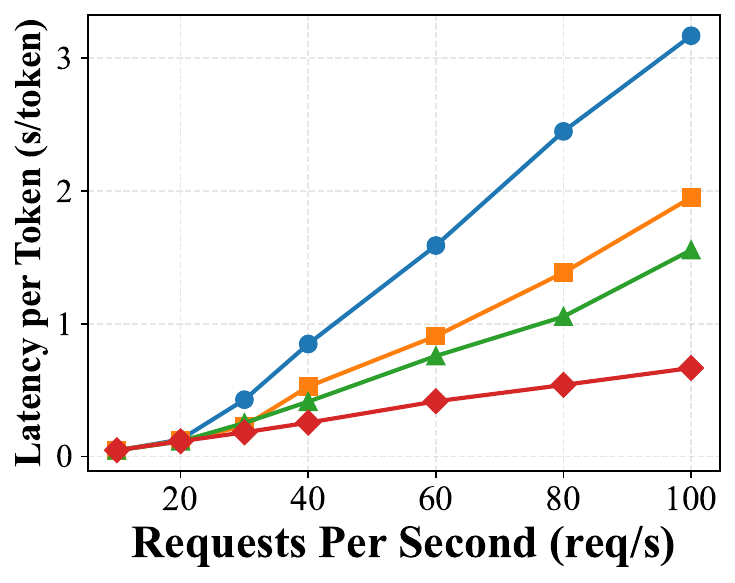}
        \caption{\textit{Average} per-token latency}
        \label{fig:chatbot-a}
    \end{subfigure}
    \hfill
    \begin{subfigure}{0.24\textwidth}
        \centering
        \includegraphics[width=\textwidth]{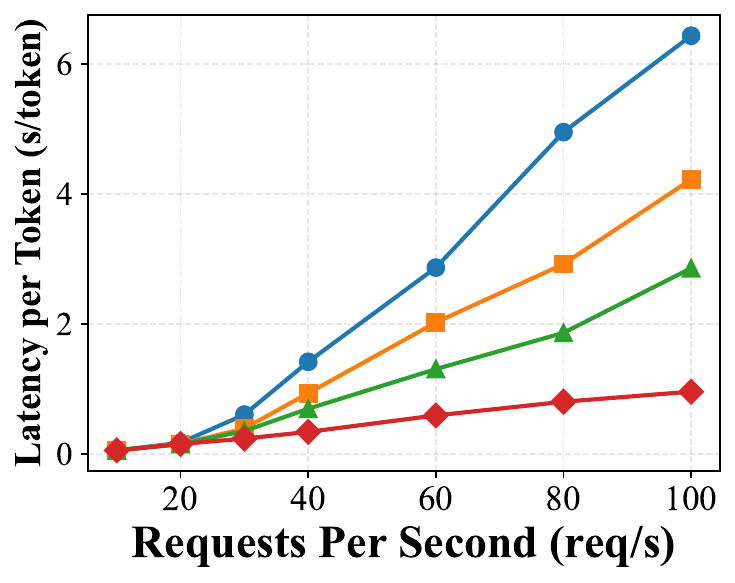}
        \caption{\textit{P90} per-token latency}
        \label{fig:chatbot-b}
    \end{subfigure}
    \hfill
    \begin{subfigure}{0.24\textwidth}
        \centering
        \includegraphics[width=\textwidth]{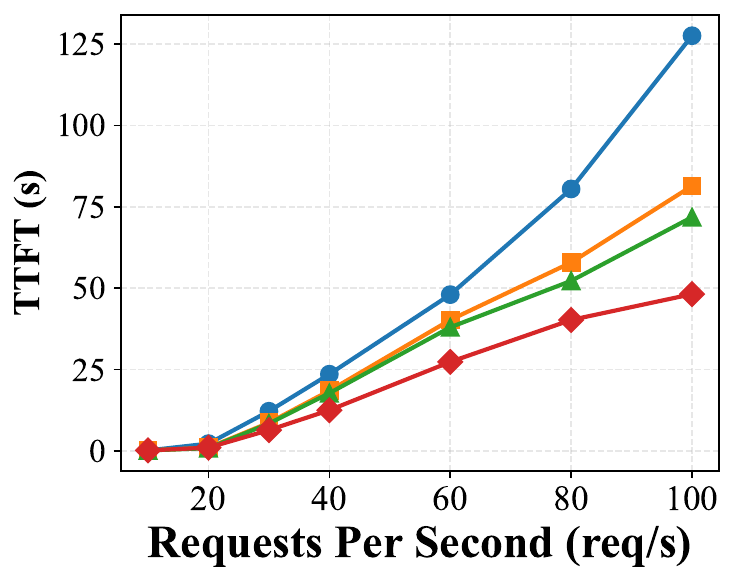}
        \caption{\textit{Average} TTFT}
        \label{fig:chatbot-c}
    \end{subfigure}
    \hfill
    \begin{subfigure}{0.24\textwidth}
        \centering
        \includegraphics[width=\textwidth]{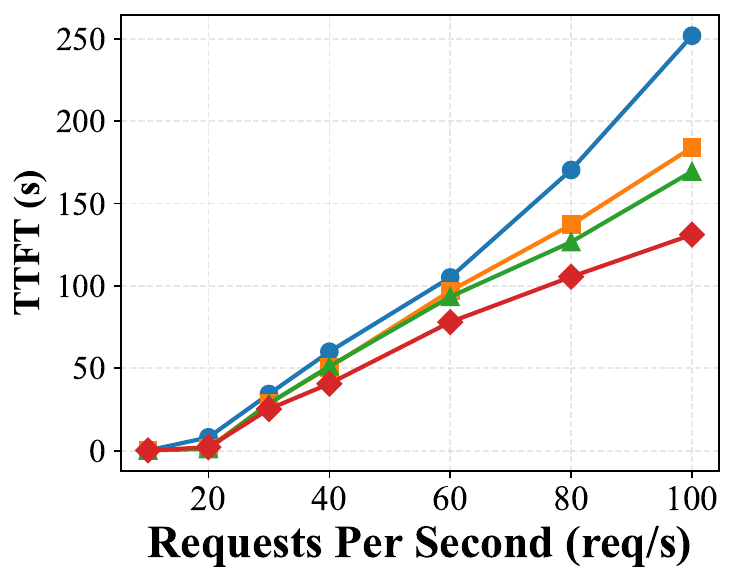}
        \caption{\textit{P90} TTFT}
        \label{fig:chatbot-d}
    \end{subfigure}
    \caption{Performance of schedulers for online chatbot serving on the real-world workload (LMSYS-Chat-1M) with the 8B model.}
    \label{fig:chatbot-performance}
\end{figure*}

Specifically, as illustrated in Figure~\ref{fig:scheduling_workflow}, when the scheduler receives a new request, it inserts the request into the waiting queue (implemented as a \textit{min-heap}) with \texttt{max\_tokens} as the initial score (\ding{172}a). This initial score ensures that unpredicted requests sink to the bottom, allowing already-predicted requests to be executed first, thereby preventing unpredicted long requests from blocking the running batch.
Meanwhile, the request is submitted to a prediction queue (\ding{172}b), where a background prediction thread processes requests \textit{asynchronously} in batches (\ding{173}). Upon completion of the prediction, the scheduler computes the score and updates the priority of the corresponding request in the heap (\ding{174}), then re-heapifies to keep the minimum at the top (\ding{175}).

This design allows new requests to run directly under low load, avoiding unnecessary prediction overhead.
In addition, it enables batched prediction for higher throughput.

\stitle{Dynamic Batching} To improve efficiency under varying load conditions, we adopt dynamic batching for the prediction thread. 
Specifically, requests can wait for a short period to accumulate into a larger batch.
Once a timeout (i.e., 3\,ms) expires or the request count reaches the maximum batch size (i.e., 32), they are submitted as a batch to the predictor.

Under high load, this batching mechanism allows waiting requests to be predicted rapidly, thereby quickly identifying the shortest request in a long waiting queue.

\stitle{Complexity}
Retrieving the minimum element from the waiting queue takes $O(1)$, while insertion, extraction, and priority updates all require $O(\log n)$. This enables the scheduler to efficiently handle requests even under high load.

\section{Experimental Evaluation}
\label{sec:experiments}

In this section, we compare the proposed TIE scheduling strategy against existing state-of-the-art (SOTA) \textit{non-preemptive} methods. Additionally, we conduct parameter tuning and ablation studies to validate our design choices.

\subsection{Experiment Settings}
\label{sec:experiment_settings}
\stitle{Testbed}
Our experiments are conducted on a server with $8\times$ NVIDIA A6000 (48 GB) GPUs interconnected via NVLink, 128 virtual CPU cores, and 512 GB of memory.

\stitle{Serving Models}
In our main experiments, we evaluate two models: Meta-Llama-3-8B-Instruct and Meta-Llama-3-70B-Instruct, both served in FP16 precision. The 70B model is deployed with 8-way tensor parallelism. 
We also evaluate six models from other families (including three Mixture-of-Experts (MoE) models) in Appendix~\ref{app:more_models}.

\stitle{Workloads and Datasets}
The experimental scenarios encompass both online Chatbot serving and offline Synthetic Data Generation (SDG) tasks. We employ three datasets: LMSYS-Chat-1M \cite{zheng2023lmsys}, ShareGPT \cite{sharegpt52k}, and Alpaca \cite{alpaca}. LMSYS-Chat-1M and ShareGPT both contain \textit{real} user conversations with LLM-based chatbots. Alpaca is a \textit{synthetic} dataset generated via self-instruct using GPT-3.5.

\stitle{Baselines}
We evaluate the following methods: (1) \textbf{FCFS}, the default strategy in vLLM; (2) \textbf{SSJF}~\cite{ssjf-aiops2024qiu} and (3) \textbf{LTR}~\cite{ltr-fu2024efficient}, SOTA \textit{non-preemptive} methods introduced in Section 1; and (4) \textbf{TIE} (ours), as described above. Their details are provided in Appendix \ref{app:baseline_details}.

\stitle{Training Data}
Unless otherwise specified, the training data for all scheduling methods is derived from the LMSYS-Chat-1M dataset, with outputs generated by Meta-Llama-3-8B-Instruct. Notably, the training data scales are kept \textit{consistent} across methods (900K samples in total): SSJF and LTR utilize the first 900K samples, while TIE uses the first 45K samples with 20 repeated generations per sample (tuned in Appendix~\ref{app:sample_size}) for distribution fitting.
This configuration ensures equal training data scale across methods.

\subsection{Online Chatbot Serving}

\begin{table*}[t]
\small
\centering
\caption{Generalization performance of scheduling strategies across different datasets and models under a request rate of 100 RPS. Training data for all methods are derived from LMSYS-Chat-1M using the 8B model.}
\label{tab:main_results}
\begin{tabular}{llcccccccc}
\toprule
\multirow{2}{*}[-0.5ex]{\textbf{Testing Dataset}} & \multirow{2}{*}[-0.5ex]{\textbf{Testing Model Size}} & \multicolumn{4}{c}{\textbf{\textit{Average} Per-token Latency (s/token)}$\downarrow$} & \multicolumn{4}{c}{\textbf{\textit{P90} Per-token Latency (s/token)}$\downarrow$} \\
\cmidrule(lr){3-6} \cmidrule(lr){7-10}
 &  & FCFS & SSJF & LTR & \cellcolor{gray!20}TIE & FCFS & SSJF & LTR & \cellcolor{gray!20}TIE \\
\midrule
LMSYS-Chat-1M & 70B & 9.08 & 5.50 & 4.34 & \cellcolor{gray!20}2.41 & 16.13 & 8.24 & 7.03 & \cellcolor{gray!20}4.05 \\
\midrule
\multirow{2}{*}{ShareGPT} & 8B & 1.66 & 0.95 & 0.86 & \cellcolor{gray!20}0.57 & 3.27 & 1.84 & 1.56 & \cellcolor{gray!20}0.90 \\
 & 70B & 4.36 & 2.43 & 2.22 & \cellcolor{gray!20}1.41 & 8.78 & 4.90 & 3.78 & \cellcolor{gray!20}2.22 \\
\midrule
\multirow{2}{*}{Alpaca} & 8B & 1.45 & 0.71 & 0.83 & \cellcolor{gray!20}0.52 & 3.62 & 1.47 & 1.76 & \cellcolor{gray!20}0.93 \\
 & 70B & 4.52 & 2.06 & 2.36 & \cellcolor{gray!20}1.54 & 11.36 & 4.32 & 5.14 & \cellcolor{gray!20}2.81 \\
\toprule
\multirow{2}{*}[-0.5ex]{\textbf{Testing Dataset}} & \multirow{2}{*}[-0.5ex]{\textbf{Testing Model Size}} & \multicolumn{4}{c}{\textbf{\textit{Average} TTFT (s)}$\downarrow$} & \multicolumn{4}{c}{\textbf{\textit{P90} TTFT (s)}$\downarrow$} \\
\cmidrule(lr){3-6} \cmidrule(lr){7-10}
 &  & FCFS & SSJF & LTR & \cellcolor{gray!20}TIE & FCFS & SSJF & LTR & \cellcolor{gray!20}TIE \\
\midrule
LMSYS-Chat-1M & 70B & 319.51 & 273.30 & 252.20 & \cellcolor{gray!20}204.03 & 618.21 & 551.15 & 507.35 & \cellcolor{gray!20}475.10 \\
\midrule
\multirow{2}{*}{ShareGPT} & 8B & 161.56 & 132.58 & 120.03 & \cellcolor{gray!20}98.10 & 316.07 & 296.69 & 273.87 & \cellcolor{gray!20}239.04 \\
 & 70B & 417.28 & 342.45 & 346.50 & \cellcolor{gray!20}288.79 & 807.27 & 744.59 & 696.75 & \cellcolor{gray!20}660.28 \\
\midrule
\multirow{2}{*}{Alpaca} & 8B & 83.65 & 62.93 & 65.33 & \cellcolor{gray!20}54.18 & 159.72 & 140.80 & 142.17 & \cellcolor{gray!20}123.02 \\
 & 70B & 235.64 & 171.14 & 174.31 & \cellcolor{gray!20}146.14 & 444.52 & 395.78 & 394.10 & \cellcolor{gray!20}359.08 \\
\bottomrule
\end{tabular}
\end{table*}

\begin{table*}[t]
\small
\centering
\caption{Performance of schedulers on SDG tasks, across training datasets and testing model sizes. Evaluated on Alpaca dataset.}
\label{tab:sdg_results}
\begin{tabular}{llccccccccc}
\toprule
\multirow{2}{*}[-0.5ex]{\textbf{Training Dataset}} & \multirow{2}{*}[-0.5ex]{\textbf{Testing Model Size}} & \multicolumn{4}{c}{\textbf{Time for 3k Samples (s)}$\downarrow$} & \multicolumn{4}{c}{\textbf{Throughput within 3 min}$\uparrow$} \\
\cmidrule(lr){3-6} \cmidrule(lr){7-10}
 &  & FCFS & SSJF & LTR & \cellcolor{gray!20}TIE (ours) & FCFS & SSJF & LTR & \cellcolor{gray!20}TIE (ours) \\
\midrule
\multirow{2}{*}{LMSYS-Chat-1M} & 8B & 229.37 & 144.93 & 139.53 & \cellcolor{gray!20}98.12 & 2292 & 3488 & 3672 & \cellcolor{gray!20}4762 \\
 & 70B & 559.30 & 324.52 & 316.98 & \cellcolor{gray!20}246.10 & 861 & 1557 & 1895 & \cellcolor{gray!20}2524 \\
\midrule
\multirow{2}{*}{Alpaca} & 8B & 229.37 & 135.79 & 130.67 & \cellcolor{gray!20}95.08 & 2292 & 3766 & 3663 & \cellcolor{gray!20}4869 \\
 & 70B & 559.30 & 311.06 & 314.36 & \cellcolor{gray!20}240.19 & 861 & 1634 & 1708 & \cellcolor{gray!20}2659 \\
\bottomrule
\end{tabular}
\end{table*}

\begin{figure*}[t]
    \centering
    
    \begin{subfigure}[t]{0.2\textwidth}
        \centering
        \vspace{0pt}  %
        \includegraphics[height=3.8cm]{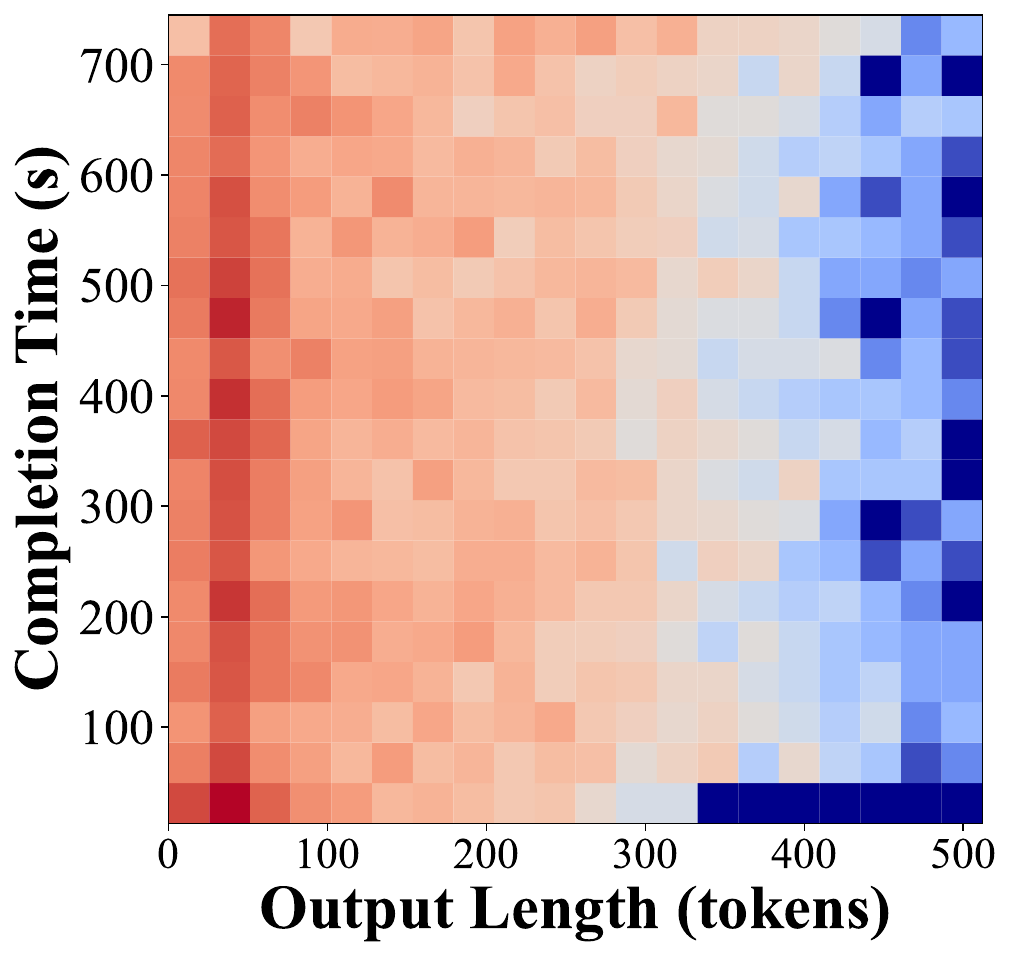}
        \caption{FCFS}
        \label{fig:sdg-a}
    \end{subfigure}
    \hfill
    \begin{subfigure}[t]{0.2\textwidth}
        \centering
        \vspace{0pt}  %
        \includegraphics[height=3.8cm]{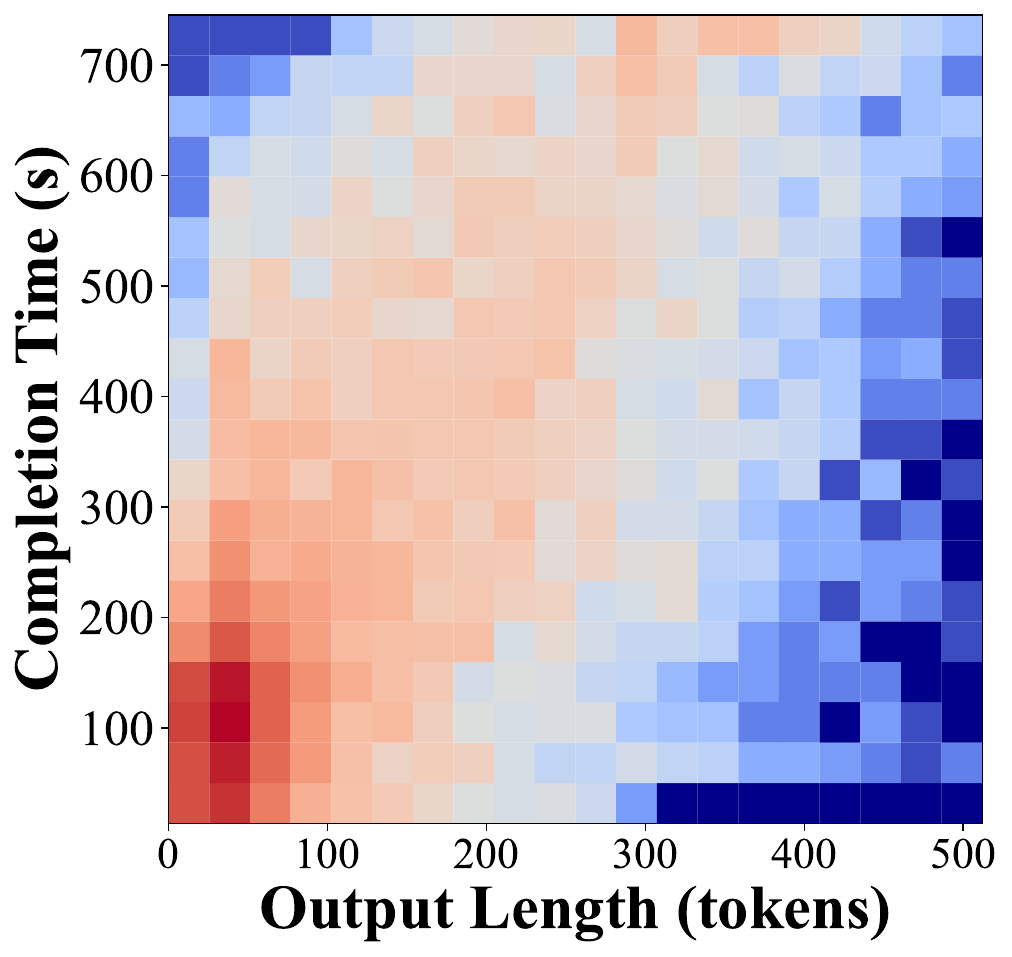}
        \caption{SSJF}
        \label{fig:sdg-b}
    \end{subfigure}
    \hfill
    \begin{subfigure}[t]{0.2\textwidth}
        \centering
        \vspace{0pt}  %
        \includegraphics[height=3.8cm]{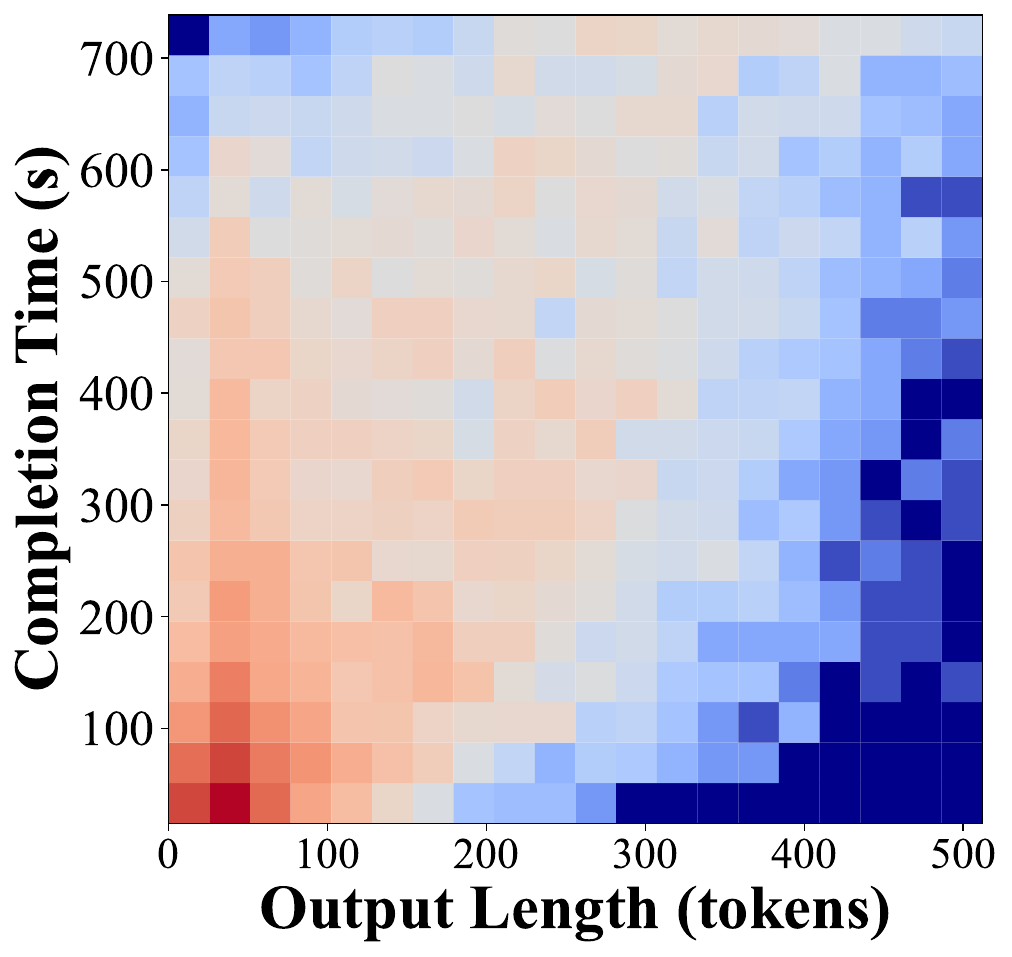}
        \caption{LTR}
        \label{fig:sdg-c}
    \end{subfigure}
    \hfill
    \begin{subfigure}[t]{0.2\textwidth}
        \centering
        \vspace{0pt}  %
        \includegraphics[height=3.8cm]{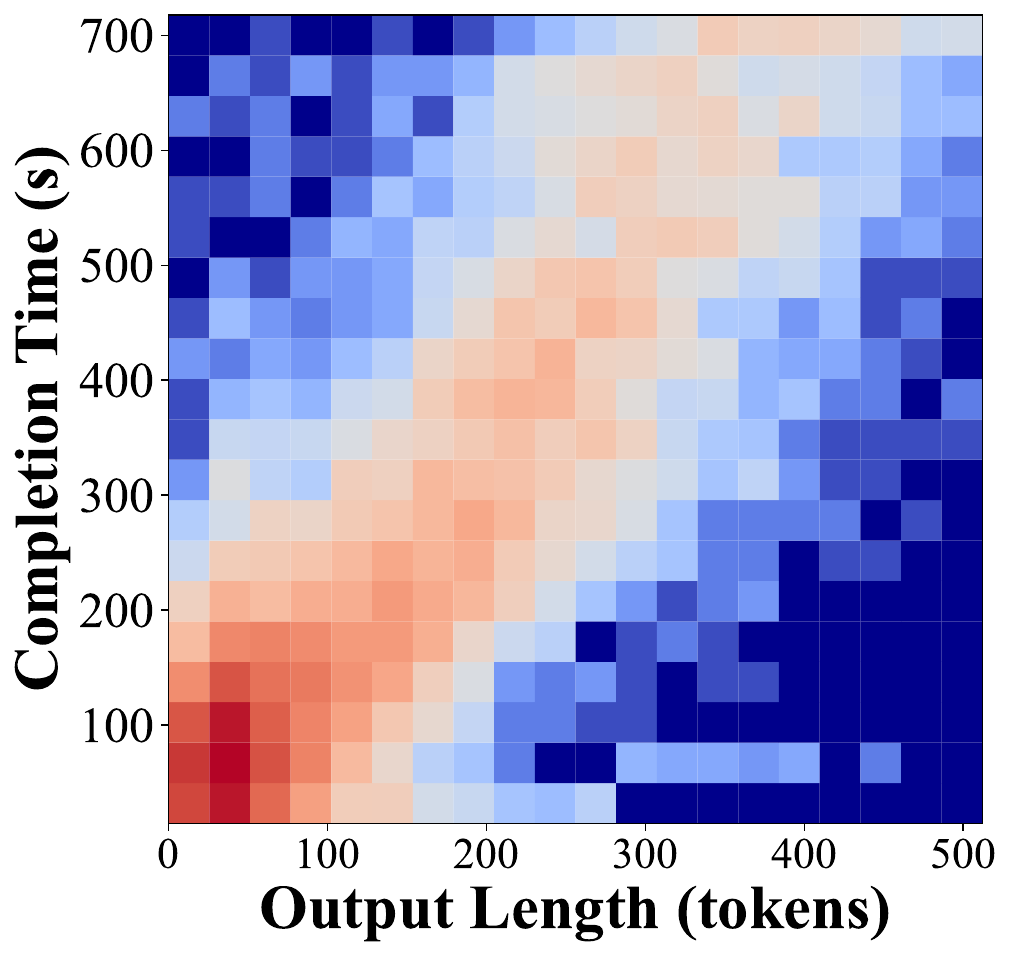}
        \caption{TIE (ours)}
        \label{fig:sdg-d}
    \end{subfigure}
    \hfill
    \begin{subfigure}[t]{0.03\textwidth}
        \centering
        \vspace{0pt}  %
        \includegraphics[height=3.5cm]{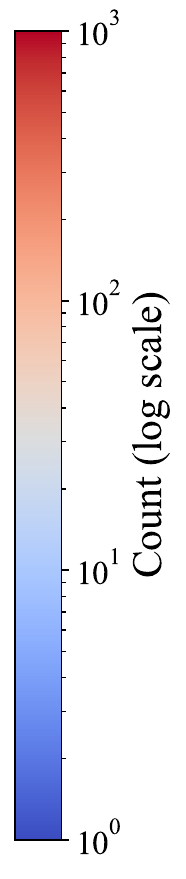}
    \end{subfigure}
    \caption{Heatmaps of completion time versus output length under different strategies on the Alpaca dataset with the 8B model. TIE achieves higher concentration than SSJF and LTR, indicating its ability to accurately capture possible output lengths.}
    \label{fig:sdg_heatmap}
\end{figure*}

\begin{table*}[t]
\small
\centering
\caption{Ablation study on distribution family and score computation method. Online chatbot experiments use the LMSYS-Chat-1M dataset, and offline SDG experiments use the Alpaca dataset. All experiments are conducted on the Meta-Llama-3-8B-Instruct model. For score computation, CVaR is calculated at $\alpha=0.9$, and $\beta$ denotes the adaptive risk-sensitivity coefficient. The \smash{\colorbox{gray!20}{gray row}} indicates the default configuration of TIE, \smash{\colorbox{orange!15}{orange rows}} indicate distribution family ablations, and \smash{\colorbox{green!10}{green rows}} indicate score computation ablations.}
\label{tab:ablation}
\begin{tabular}{llccccccc}
\toprule
\multirow{2}{*}[-0.5ex]{\textbf{Distribution}} & \multirow{2}{*}[-0.5ex]{\textbf{Score$=$}} & \multicolumn{2}{c}{\textbf{Per-token Latency (s)}$\downarrow$} & \multicolumn{2}{c}{\textbf{TTFT (s)}$\downarrow$} & \multirow{2}{*}[-0.5ex]{\shortstack[c]{\textbf{Time for 3k}\\\textbf{Samples (s)}$\downarrow$}} & \multirow{2}{*}[-0.5ex]{\shortstack[c]{\textbf{Throughput}\\\textbf{within 3 min}$\uparrow$}} \\
\cmidrule(lr){3-4} \cmidrule(lr){5-6}
 &  & Average & P90 & Average & P90 &  &  \\
\midrule
\rowcolor{gray!20} log-t ($\nu$=3.5) & $\mathbb{E}[X]+\beta \cdot$ CVaR & 0.67 & 0.96 & 48.23 & 131.13 & 98.12 & 4762 \\
\midrule
\rowcolor{orange!15} log-t (dynamic $\nu$) & $\mathbb{E}[X]+\beta \cdot$ CVaR & 0.69 & 1.02 & 47.76 & 132.40 & 97.70 & 4709 \\
\rowcolor{orange!15} log-normal & $\mathbb{E}[X]+\beta \cdot$ CVaR & 1.63 & 3.37 & 70.91 & 174.88 & 142.21 & 3584 \\
\midrule
\rowcolor{green!10} log-t ($\nu$=3.5) & $\mathbb{E}[X]$ \textit{(i.e., SEPT scheduling)} & 0.75 & 1.21 & 52.26 & 145.03 & 108.51 & 4257 \\
\rowcolor{green!10} log-t ($\nu$=3.5) & $\mathbb{E}[X]+0.1\cdot$ CVaR & 0.72 & 1.15 & 50.07 & 141.77 & 104.76 & 4391 \\
\rowcolor{green!10} log-t ($\nu$=3.5) & $\mathbb{E}[X]+0.3\cdot$ CVaR & 0.71 & 1.18 & 49.33 & 139.65 & 105.04 & 4422 \\
\bottomrule
\end{tabular}
\end{table*}

Following prior work~\cite{ltr-fu2024efficient}, we adopt per-token latency (PTLA) and Time-To-First-Token (TTFT) as metrics to reflect user experience in online chatbots. Per-token latency measures the average latency per output token, while TTFT captures the waiting time before the first output token.

\stitle{Results}
Figure~\ref{fig:chatbot-performance} illustrates the performance of each method under the LMSYS-Chat-1M dataset and the 8B model. As the requests per second (RPS) increases, all metrics exhibit an upward trend, with FCFS showing the steepest rise. 
Existing methods (i.e., SSJF and LTR) predict output lengths and sort requests accordingly, reducing per-token latency and TTFT by up to $2.05\times$ and $1.78\times$, respectively. 
Our proposed TIE further improves upon these results by modeling and predicting output length distributions, thereby leveraging the inherent uncertainty of LLM outputs. As shown in Figure \ref{fig:chatbot-a}, at 100 RPS, TIE achieves a $4.73\times$ reduction in average per-token latency compared to FCFS, and outperforms SSJF and LTR by $2.91\times$ and $2.31\times$, respectively. 
Similar results are also observed in TTFT.
This improvement stems from our distribution estimation, which comprehensively captures possible output lengths and potential risks.

To evaluate the scheduler's ability to handle workload fluctuations, we examine the increase in average per-token latency when RPS rapidly rises from 30 to 100. 
For FCFS, SSJF, and LTR, this metric is $7.42\times$, $8.55\times$, and $6.17\times$, respectively; whereas for TIE, it is only $3.68\times$. 
This improvement is attributed to our risk-adaptive strategy: as requests accumulate in the queue, TIE adopts a more conservative scheduling strategy, deprioritizing potentially long requests, thus reducing the risk of HOL blocking.

\stitle{Cross-Model and Cross-Dataset Generalization}
We further evaluate the generalization of each scheduling strategy. Note that all predictors are trained on data derived from the LMSYS-Chat-1M dataset using the 8B model. We test them on the ShareGPT and Alpaca datasets, as well as the 70B model. The results are summarized in Table~\ref{tab:main_results}.

In terms of per-token latency, on the 70B model, TIE achieves a $3.77\times$ speedup over FCFS, while SSJF and LTR achieve $1.65\times$ and $2.09\times$, respectively. On the ShareGPT dataset, TIE still outperforms existing methods, achieving $2.91\times$ (8B) and $3.09\times$ (70B) speedups over FCFS. Similar results are also observed on Alpaca.

These results demonstrate that TIE generalizes well across diverse settings. This can be attributed to distribution modeling that avoids overfitting to specific workloads, and risk-adaptive scheduling that mitigates prediction errors.

\subsection{Offline Synthetic Data Generation (SDG)}

SDG is emerging as an important inference workload for LLMs due to the scarcity of high-quality data \cite{chan2024balancing}. In SDG tasks, shorter responses are generally preferred due to their better cost-efficiency, greater diversity, and to mitigate \textit{length bias} in downstream evaluation \cite{singhal2024a,dubois2024lengthcontrolled}. 
This preference makes SDG an important testbed for SJF-based scheduling strategies \cite{ltr-fu2024efficient}.
Since the Alpaca dataset is synthetic data generated via self-instruct, it naturally serves as a representative benchmark for SDG workloads \cite{alpaca}.

Following prior work~\cite{ltr-fu2024efficient}, we submit 10K prompts and employ two metrics: (1) the time required to generate 3K samples (i.e., time@3K), and (2) the number of samples generated within 3 minutes (i.e., throughput). 

\stitle{Results}
As shown in Table~\ref{tab:sdg_results}, SJF and LTR outperform FCFS on both metrics, achieving faster generation speeds, while TIE achieves further improvements. In terms of time@3K, TIE achieves speedups of approximately $2.34\times$, $1.48\times$, and $1.42\times$ over FCFS, SSJF, and LTR, respectively. In cross-model and cross-dataset scenarios, although the improvements are somewhat diminished, TIE still maintains strong performance. These results demonstrate the effectiveness of our design: considering the output length of requests from a distributional and uncertainty perspective.

To explain this improvement, we visualize the scheduling patterns of different strategies in Figure~\ref{fig:sdg_heatmap}.
The color represents the concentration of requests. Due to space constraints, we only show requests with output lengths less than 512 tokens (over 90\% of requests).
Full figures for both 8B and 70B models are provided in Appendix \ref{app:sdg_heatmap}.

Under FCFS, requests are distributed almost uniformly across all lengths.
SSJF and LTR prioritize shorter requests by predicting output lengths, clustering short requests in the \textit{lower-left} region. However, the clustering becomes weaker for longer outputs, indicating that they struggle to rank requests with longer outputs. This is because longer requests (e.g., ``write an article about AI") inherently exhibit higher output uncertainty compared to shorter ones (e.g., ``Translate `thank you' to Spanish").
In contrast, TIE explicitly models the output length distribution and incorporates uncertainty, yielding a more accurate characterization of possible output lengths. As a result, requests scheduled by TIE exhibit a significantly higher concentration, even for those with longer outputs.
This result demonstrates that TIE can enhance the effectiveness of the shortest-job-first principle.

\subsection{Ablation Study}
\label{sec:ablation}

We ablate each component of TIE to assess its contribution.

\stitle{Distribution Family}
As discussed in Section~\ref{sec:distribution}, we evaluate the fitting performance of several common heavy-tailed distribution families and ultimately select the log-t distribution with the fixed $\nu=3.5$. We also experiment with other distribution families that achieve $>50\%$ pass rates in the KS test, training models and performing scheduling. Table~\ref{tab:ablation} presents the results of both online and offline experiments. 
Compared to TIE, the log-normal distribution, which exhibits inferior fitting performance (60.3\% vs. 90.6\%), leads to degraded scheduling performance. This indicates that fitting quality directly affects the accuracy of distribution modeling and scheduling effectiveness. The log-t distribution with a dynamic $\nu$ parameter achieves comparable performance to TIE. Considering model complexity and scheduling efficiency, we adopt a fixed $\nu$ parameter.

\stitle{Score Computation}
Equation~\ref{equ:score} presents our score computation method, which incorporates the expectation, CVaR, and an adaptive risk-sensitivity coefficient $\beta$. We also evaluate alternative score computation methods, as shown in Table~\ref{tab:ablation}. 
Using only the expectation (i.e., SEPT) yields reasonable performance. Incorporating CVaR further improves the results, with the most notable gains achieved when the adaptive risk-sensitivity coefficient $\beta$ is introduced.

We further evaluate $\beta$ across different RPS levels 
(detailed in Appendix~\ref{app:adaptive-beta}). In brief, the optimal fixed $\beta$ varies 
with RPS. More importantly, real-world workloads typically exhibit fluctuating 
RPS, making an adaptive $\beta$ necessary.

\stitle{Scheduling Overhead}
We evaluate the overhead of prediction and scheduling. In the online chatbot experiment on LMSYS-Chat-1M with the 8B model at 100 RPS, the average scheduling latency is 4.26 ms per request, while the average TTFT is 48.23 s. Compared to FCFS, which yields an average TTFT of 127.55 s, the overhead introduced by TIE is negligible relative to the performance gains.

\section{Discussion}
\label{sec:Discussion}

\stitle{Connection to Learning-Augmented Algorithms}
TIE employs a predictor to estimate distribution parameters, 
constructs the distribution, and schedules accordingly, thus 
naturally falling within the \emph{learning-augmented algorithms} 
framework~\citep{lykouris2021competitive, purohit2018improving, mitzenmacher2022algorithms}. 
In this framework, the key design challenge is to balance two 
metrics: \emph{consistency} (good performance when predictions 
are accurate) and \emph{robustness} (graceful degradation when 
predictions err). Our scheduling score (Eq.~\ref{equ:score}) 
maps directly onto these two metrics: 
$\mathbb{E}[\tilde{X}]$ captures the prediction-trusting 
consistency component, reflecting what the scheduler expects 
the request to take if predictions are accurate, while 
$\text{CVaR}_\alpha[\tilde{X}]$ captures the robustness 
component by hedging against tail outcomes arising from 
prediction errors or inherent stochasticity. The adaptive 
coefficient $\beta$ (Eq.~\ref{equ:beta}) then governs the 
trade-off between them: when system pressure is low, $\beta$ 
remains small and the scheduler favors $\mathbb{E}[\tilde{X}]$ 
(for consistency); as pressure grows, $\beta$ increases and 
places more weight on the CVaR term (for robustness). Our 
ablation (Table~\ref{tab:ablation}) confirms this design 
outperforms both pure SEPT (consistency only) and fixed-$\beta$ 
variants (static trade-off), demonstrating that the 
consistency-robustness principle from learning-augmented 
algorithms can be effectively realized for LLM inference 
scheduling.

\stitle{Limitations}
A practical limitation of TIE is its training data requirement. 
Unlike point-estimate predictors, which can be trained directly 
on production logs from LLM serving systems, TIE requires 
multiple generations per prompt to fit the output length 
distribution. 
We evaluate the impact of training 
data scales on TIE in Appendix~\ref{app:data-scale}.
Although TIE's generalization partially alleviates this
concern, addressing the training data requirement (e.g., through few-shot
learning) remains an important direction for future work.

\section{Conclusion}
We present TIE, an uncertainty-aware scheduling strategy for LLM inference. 
We employ the log-t distribution to model the heavy-tailed output length distribution of LLM inference requests, and introduce CVaR with risk-adaptive scheduling to handle the tail risk of output lengths. 
Experimental results demonstrate that TIE achieves substantial improvements over FCFS and existing SOTA methods in both online and offline scenarios, and exhibits strong generalization across different datasets and models.

\section*{Acknowledgements}

This work was sponsored by the National Natural Science Foundation of China (NO.~62472327, No.~62272353), the Key R\&D Program of Hubei Province (No.~2023BAB077), the Sichuan Clinical Research Center for Imaging Medicine (YXYX2402), and the Tencent Research Fund.
We thank the anonymous reviewers for their constructive feedback, which has helped improve this paper.

\section*{Impact Statement}
TIE improves the efficiency and quality of service of LLM inference, and its distribution-modeling perspective may also benefit related areas beyond inference scheduling. As an infrastructure-level optimization, TIE does not introduce new capabilities to the underlying LLMs.

\bibliography{reference}
\bibliographystyle{icml2026}

\newpage
\appendix
\onecolumn
\section{Theoretical Analysis of Heavy-Tailed Output Length Distribution}
\label{app:heavy_tail}

This appendix provides a theoretical analysis supporting Theorem~\ref{thm:heavy_tail_main} in the main text. We show that under Assumption~\ref{asm:effective_rate}, the output length distribution exhibits heavy-tailed characteristics with power-law tail decay.

\subsection{Problem Setup}

Consider the text generation process of an autoregressive language model. Given an input prompt, the model sequentially samples tokens $x_1, x_2, \ldots$ until generating the end-of-sequence token (EOS). As stated in the main text, the output length is defined as:
\begin{equation}
    L = \min\{t \geq 1 : x_t = \texttt{EOS}\}.
\end{equation}

Let $p_t = P(x_t = \texttt{EOS} \mid x_{<t})$ denote the probability of generating EOS at step $t$. Since the preceding context $x_{<t}$ results from stochastic sampling, $p_t$ is itself a random variable that depends on the generation trajectory.

\begin{lemma}[Tail Probability Expression]
\label{lemma:tail}
The tail probability of the output length $L$ satisfies:
\begin{equation}
\label{eq:pl}
    P(L > n) = \mathbb{E}\left[\prod_{t=1}^{n}(1 - p_t)\right].
\end{equation}
\end{lemma}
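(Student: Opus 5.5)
The approach is to write the event $\{L>n\}$ as the intersection of the non-termination events at steps $1,\dots,n$. Then we peel off one step at a time with the tower property of conditional expectation, conditioning on the prefix $x_{<t}$. Let $\mathcal{F}_{t}=\sigma(x_1,\dots,x_t)$ denote the information generated by the first $t$ sampled tokens, with $\mathcal{F}_0$ trivial. By definition $p_t=P(x_t=\texttt{EOS}\mid \mathcal{F}_{t-1})$ is $\mathcal{F}_{t-1}$-measurable. Since $L=\min\{t\ge 1: x_t=\texttt{EOS}\}$, we have
\begin{equation*}
\mathbf{1}\{L>n\}=\prod_{t=1}^{n}\mathbf{1}\{x_t\neq\texttt{EOS}\}.
\end{equation*}
Here we adopt the convention that once EOS is emitted the process is stopped, or equivalently, that on $\{L<t\}$ the value of $p_t$ is irrelevant because the indicator product already vanishes.

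The key step is an induction on $n$, proving $P(L>n)=\mathbb{E}[\prod_{t=1}^{n}(1-p_t)]$. The base case $n=0$ is trivial: both sides equal $1$ with the empty product. For the inductive step we write $\mathbf{1}\{L>n\}=\mathbf{1}\{L>n-1\}\,\mathbf{1}\{x_n\neq\texttt{EOS}\}$. We condition on $\mathcal{F}_{n-1}$, noting that $\mathbf{1}\{L>n-1\}$ is $\mathcal{F}_{n-1}$-measurable, and obtain
\begin{equation*}
P(L>n)=\mathbb{E}\bigl[\mathbf{1}\{L>n-1\}(1-p_n)\bigr].
\end{equation*}
The induction hypothesis cannot be applied directly here, because the extra factor $(1-p_n)$ is random. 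It is cleaner to prove the stronger statement directly. Iterating the tower property backwards from $t=n$ down to $t=1$, one shows that for each $k$
\begin{equation*}
\mathbb{E}\Bigl[\prod_{t=1}^{n}\mathbf{1}\{x_t\neq\texttt{EOS}\}\Bigr]=\mathbb{E}\Bigl[\prod_{t=1}^{k}\mathbf{1}\{x_t\neq\texttt{EOS}\}\prod_{t=k+1}^{n}(1-p_t)\Bigr].
\end{equation*}
At $k=0$ this gives the lemma.

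There is a subtle point, and it is the main obstacle. The factors $(1-p_t)$ for $t>k$ depend on $x_k$, so conditioning on $\mathcal{F}_{k-1}$ does not simply replace $\mathbf{1}\{x_k\neq\texttt{EOS}\}$ by $(1-p_k)$. To handle this, I would interpret $p_t$ as in the paper's model: the trajectory's termination rate is evaluated along the continuation of the non-EOS path, with the indicator products and the probabilities tracked jointly. Concretely, I would use the chain rule
\begin{equation*}
P(x_1,\dots,x_n\ \text{all}\neq\texttt{EOS})=\sum_{x_{1:n}\neq\texttt{EOS}}\prod_{t}P(x_t\mid x_{<t}).
\end{equation*}
Each factor splits as $(1-p_t)$ times the renormalized conditional law of $x_t$ given $x_t\neq\texttt{EOS}$. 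This identifies the right-hand side as $\mathbb{E}[\prod_{t}(1-p_t)]$, where the expectation is taken over trajectories generated by the EOS-suppressed sampling process, which is the natural reading of "expectation over generation trajectories" in the paper. Making this measure identification explicit, rather than glossing over it, is the step I expect to require the most care.
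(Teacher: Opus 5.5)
Your final argument is correct and is the paper's argument. The paper's proof only asserts $P(x_1\neq\texttt{EOS},\dots,x_n\neq\texttt{EOS})=\mathbb{E}[\prod_{t=1}^{n}(1-p_t)]$ ``by the chain rule,'' with the expectation taken over ``all stochastic generation trajectories.'' Your factorization $P(x_t\mid x_{<t})=(1-p_t)\,Q(x_t\mid x_{<t})$ for $x_t\neq\texttt{EOS}$ makes explicit that this expectation is under the EOS-suppressed law $Q$. Where $p_t=1$ the renormalization is undefined, but those terms vanish, so $Q$ can be defined arbitrarily there.

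That identification is necessary, not just careful. So you should delete, rather than try to repair, the $k$-indexed identity you attribute to iterating the tower property. Only the step from $k=n$ to $k=n-1$ is a valid tower step. Its $k=0$ case is the lemma read under the true sampling law, which fails in general. For example, let $x_1$ be uniform on $\{\texttt{EOS},A,B\}$, with EOS impossible after $A$ and certain after $B$. Then $P(L>2)=1/3=\mathbb{E}_Q[(1-p_1)(1-p_2)]$. Under the true law, however, the expectation of $(1-p_1)(1-p_2)$ is $4/9$ if one sets $p_2:=0$ after an EOS at step $1$, and $2/9$ if one sets $p_2:=1$.
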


\begin{proof}
The event $\{L > n\}$ occurs if and only if no EOS token is generated in steps $1, 2, \ldots, n$. By the chain rule of conditional probability:
\begin{equation}
    P(L > n) = P(x_1 \neq \texttt{EOS}, \ldots, x_n \neq \texttt{EOS}) = \mathbb{E}\left[\prod_{t=1}^{n}(1 - p_t)\right],
\end{equation}
where the expectation is taken over all stochastic generation trajectories.
\end{proof}

For convenience, we restate Assumption~\ref{asm:effective_rate} from the main text.

\begin{assumption}[Restatement of Assumption~\ref{asm:effective_rate}]
The termination rate across generation trajectories follows a distribution with density $f$ satisfying $f(p) \sim c \cdot p^{\alpha-1}$ as $p \to 0^+$ for some constants $\alpha, c > 0$.
\label{asm:effective_rate_recall}
\end{assumption}

\subsection{Proof of the Main Theorem}

To connect Lemma~\ref{lemma:tail} with Assumption~\ref{asm:effective_rate_recall}, we introduce the \textbf{effective termination rate} for a generation trajectory:
\begin{equation}
    \tilde{p}_n = 1 - \left(\prod_{t=1}^{n}(1 - p_t)\right)^{1/n},
\end{equation}
which represents the geometric mean of termination probabilities over the first $n$ steps. This quantity captures the overall tendency of a trajectory to terminate: trajectories generating verbose content (e.g., detailed explanations or structured outputs) tend to have lower $\tilde{p}_n$, while those generating concise responses have higher $\tilde{p}_n$.

In practice, $\tilde{p}_n$ can only be computed retrospectively after generation completes, since each $p_t$ depends on the preceding context due to the autoregressive nature. However, the ``type'' of a generation trajectory (i.e., whether it will be verbose or concise) is largely determined by early-stage sampling decisions. This motivates Assumption~\ref{asm:effective_rate_recall}, which posits that each trajectory can be characterized by a fixed termination rate $\tilde{p}$ at the beginning of generation.

Under this assumption, the tail probability in Eq.~\eqref{eq:pl} simplifies to a mixture model:
\begin{equation}
\label{eq:tail_mixture}
    P(L > n) = \mathbb{E}\left[(1 - \tilde{p})^n\right] = \int_0^1 (1 - p)^n f(p) \, dp.
\end{equation}

We now prove Theorem~\ref{thm:heavy_tail_main} under Assumption~\ref{asm:effective_rate_recall}.

\begin{proof}[Proof of Theorem~\ref{thm:heavy_tail_main}]
By Assumption~\ref{asm:effective_rate_recall}, the density $f$ satisfies $f(p) \sim c \cdot p^{\alpha-1}$ as $p \to 0^+$.

Fix $\delta \in (0, 1)$ and decompose the integral in Eq.~\eqref{eq:tail_mixture} as:
\begin{equation}
    P(L > n) = \underbrace{\int_0^\delta (1-p)^n f(p) \, dp}_{I_1(n)} + \underbrace{\int_\delta^1 (1-p)^n f(p) \, dp}_{I_2(n)}.
\end{equation}

For $p \in [\delta, 1]$, we have $(1-p)^n \leq (1-\delta)^n$, which yields:
\begin{equation}
    I_2(n) \leq (1-\delta)^n \int_\delta^1 f(p) \, dp \leq (1-\delta)^n = O\left((1-\delta)^n\right) = o(n^{-\alpha}).
\end{equation}
Thus, $I_2(n)$ decays exponentially and is negligible compared to any polynomial decay.

For $I_1(n)$, applying the substitution $u = np$ gives:
\begin{equation}
    I_1(n) = \frac{1}{n} \int_0^{n\delta} \left(1 - \frac{u}{n}\right)^n f\left(\frac{u}{n}\right) du.
\end{equation}
Define $h_n(u) = n^\alpha \cdot (1 - u/n)^n \cdot f(u/n)/n \cdot \mathbf{1}_{[0, n\delta]}(u)$. By Assumption~\ref{asm:effective_rate_recall}, there exists $M > 0$ such that $f(p) \leq M p^{\alpha-1}$ for all $p \in (0, \delta)$. Since $\delta < 1$, we have $u \leq n\delta < n$ on the domain of integration, so the standard inequality $(1-u/n)^n \leq e^{-u}$ applies. This gives:
\begin{equation}
    |h_n(u)| \leq M \cdot u^{\alpha-1} e^{-u},
\end{equation}
where the right-hand side is integrable over $[0, \infty)$ with integral $M\Gamma(\alpha)$. Moreover, for any fixed $u > 0$, as $n \to \infty$:
\begin{equation}
    h_n(u) \to c \cdot u^{\alpha-1} e^{-u}.
\end{equation}
By the dominated convergence theorem:
\begin{equation}
    \lim_{n \to \infty} n^\alpha I_1(n) = \int_0^\infty c \cdot u^{\alpha-1} e^{-u} \, du = c \cdot \Gamma(\alpha).
\end{equation}

Combining the above results, we obtain:
\begin{equation}
    P(L > n) = I_1(n) + I_2(n) \sim \frac{c \cdot \Gamma(\alpha)}{n^\alpha} \quad \text{as } n \to \infty.
\end{equation}
\end{proof}

\subsection{Implications for Heavy-Tailedness}

The power-law tail decay established above has important implications. For any $\lambda > 0$:
\begin{equation}
    \lim_{n \to \infty} e^{\lambda n} P(L > n) = \lim_{n \to \infty} \frac{c \cdot \Gamma(\alpha) \cdot e^{\lambda n}}{n^\alpha} = \infty.
\end{equation}
This is a standard characterization of heavy-tailed distributions~\cite{clauset2009power, foss2011introduction}, implying that long outputs occur with a non-negligible probability, explaining the high skewness, large coefficients of variation, and frequent occurrence of exceptionally long outputs observed in our empirical analysis (Section~\ref{sec:distribution}).

\begin{remark}[Interpretation of Assumption~\ref{asm:effective_rate} and Assumption~\ref{asm:effective_rate_recall}]
The condition $f(p) \sim c \cdot p^{\alpha-1}$ as $p \to 0^+$ (equivalently, $F(p) \sim \frac{c}{\alpha} p^\alpha$) captures scenarios where a non-negligible fraction of generation trajectories exhibit persistently low termination rates. This occurs naturally in several practical settings. For example, when generating structured outputs such as JSON or code, the model must complete syntactic structures (e.g., matching braces or parentheses) before termination, leading to trajectories with near-zero termination probability over extended spans. Similarly, when the model commits to generating a detailed explanation or a long-form response, it maintains low termination probability until the content is complete. If such trajectories constitute a probability mass proportional to $p^\alpha$ for small termination rates $p$, the heavy-tailed behavior emerges as characterized by Theorem~\ref{thm:heavy_tail_main}.
\end{remark}

\section{Derivations for the Censored Log-t Distribution}
\label{app:censored_derivations}

Let $Y \sim t(\nu)$ be a standard $t$-distributed random variable with $\nu$ degrees of freedom. We define the random variable
\begin{equation}
    X = \exp(\mu + \sigma Y),
\end{equation}
which follows a log-$t$ distribution, denoted as $X \sim \text{Log-}t(\mu, \sigma, \nu)$, where $\sigma > 0$ is the scale parameter.

The probability density function (PDF) of the standard $t$ distribution is given by:
\begin{equation}
    t_\nu(y) = \frac{\Gamma\left(\frac{\nu+1}{2}\right)}{\sqrt{\nu\pi}\,\Gamma\left(\frac{\nu}{2}\right)} \left(1 + \frac{y^2}{\nu}\right)^{-\frac{\nu+1}{2}},
\end{equation}
with cumulative distribution function (CDF) denoted by $T_\nu(y)$. By applying the change of variables $x = \exp(\mu + \sigma y)$, we obtain the PDF of the log-$t$ distribution:
\begin{equation}
    f(x \mid \mu, \sigma, \nu) = \frac{1}{\sigma x} \cdot t_\nu\left(\frac{\ln x - \mu}{\sigma}\right), \quad x > 0.
\end{equation}

We define the censored random variable as $\tilde{X} = \min(X, x_{\max})$, and let
\begin{equation}
    y_{\max} = \frac{\ln x_{\max} - \mu}{\sigma}.
\end{equation}

To simplify subsequent derivations, we introduce the \emph{partial expectation function}:
\begin{equation}
    \Psi(y) = \int_{-\infty}^{y} \exp(\mu + \sigma s) \cdot t_\nu(s) \, ds.
    \label{eq:partial-expectation-function}
\end{equation}
This function admits a natural probabilistic interpretation: $\Psi(y) = \mathbb{E}[X \cdot \mathbf{1}_{\{Y \leq y\}}]$ represents the contribution to the expectation of $X$ from the event $\{Y \leq y\}$.

In LLM inference, the \texttt{max\_tokens} parameter imposes an upper bound on the generation length. We therefore censor the distribution at $x_{\max} = \texttt{max\_tokens}$ (i.e., $\tilde{X} = \min(X, x_{\max})$) and compute the censored expectation $\mathbb{E}[\tilde{X}]$, which is always finite. Note that this censoring is necessary because the uncensored expectation $\mathbb{E}[X]$ does not exist for the log-$t$ distribution---the moment generating function of the Student's $t$ distribution is infinite for any $\sigma > 0$ due to its heavy tails.

\paragraph{Numerical Computation of $\Psi(y)$.} Since $\Psi(y)$ does not admit a closed-form solution, we employ Monte Carlo sampling for efficient computation. Specifically, we draw $N = 10{,}000$ samples $\{Y_i\}_{i=1}^N$ from the standard $t$ distribution and approximate:
\begin{equation}
    \Psi(y) \approx \frac{1}{N} \sum_{i=1}^{N} \exp(\mu + \sigma Y_i) \cdot \mathbf{1}_{\{Y_i \leq y\}}.
\end{equation}

\subsection{Censored Expectation}
\label{appendix:censored-expectation}

The expectation of $\tilde{X}$ can be decomposed into two terms:
\begin{equation}
    \mathbb{E}[\tilde{X}] = \underbrace{\int_0^{x_{\max}} x \cdot f(x) \, dx}_{I_1} + \underbrace{x_{\max} \cdot \mathbb{P}(X > x_{\max})}_{I_2}.
\end{equation}

\paragraph{Derivation of $I_1$.} Substituting the PDF $f(x) = \frac{1}{\sigma x} \cdot t_\nu\left(\frac{\ln x - \mu}{\sigma}\right)$ and applying the substitution $y = (\ln x - \mu)/\sigma$, we have $x = \exp(\mu + \sigma y)$ and $dx = \sigma \exp(\mu + \sigma y) \, dy$. The integration limits transform from $x \in (0, x_{\max}]$ to $y \in (-\infty, y_{\max}]$, yielding:
\begin{align}
    I_1 &= \int_0^{x_{\max}} x \cdot \frac{1}{\sigma x} \cdot t_\nu\left(\frac{\ln x - \mu}{\sigma}\right) dx \nonumber \\
    &= \frac{1}{\sigma} \int_0^{x_{\max}} t_\nu\left(\frac{\ln x - \mu}{\sigma}\right) dx \nonumber \\
    &= \frac{1}{\sigma} \int_{-\infty}^{y_{\max}} t_\nu(y) \cdot \sigma \exp(\mu + \sigma y) \, dy \nonumber \\
    &= \int_{-\infty}^{y_{\max}} \exp(\mu + \sigma y) \cdot t_\nu(y) \, dy = \Psi(y_{\max}).
\end{align}

\paragraph{Derivation of $I_2$.} Since $X = \exp(\mu + \sigma Y)$, we have $X \leq x_{\max}$ if and only if $Y \leq y_{\max}$. Thus $\mathbb{P}(X \leq x_{\max}) = T_\nu(y_{\max})$, and:
\begin{equation}
    I_2 = x_{\max} \cdot [1 - T_\nu(y_{\max})].
\end{equation}

Combining these results, we obtain:
\begin{equation}
    \mathbb{E}[\tilde{X}] = \Psi(y_{\max}) + x_{\max} \cdot [1 - T_\nu(y_{\max})].
    \label{eq:truncated-expectation}
\end{equation}
This result establishes Eq.~\eqref{eq:truncated-ex} in Section~\ref{sec:scheduling_strategy}.

\subsection{Censored Conditional Value-at-Risk}
\label{appendix:censored-cvar}

The Conditional Value-at-Risk (CVaR) is defined as the conditional expectation beyond the Value-at-Risk (VaR) threshold:
\begin{equation}
    \text{CVaR}_\alpha(X) = \mathbb{E}[X \mid X \geq \text{VaR}_\alpha(X)],
\end{equation}
where $\text{VaR}_\alpha(X) = F^{-1}(\alpha)$ denotes the $\alpha$-quantile.

\paragraph{Case 1: $\alpha \geq T_\nu(y_{\max})$.} 
In this case, the $\alpha$-quantile of $\tilde{X}$ equals the censoring point (i.e., $\text{VaR}_\alpha(\tilde{X}) = x_{\max}$), and $\mathbb{P}(\tilde{X} > x_{\max}) = 0$. Since the conditioning event has zero probability, the conditional expectation is not well-defined in the standard sense. Following the convention in risk management literature, we define:
\begin{equation}
    \text{CVaR}_\alpha(\tilde{X}) = x_{\max}.
\end{equation}

\paragraph{Case 2: $\alpha < T_\nu(y_{\max})$.} 

The VaR of the censored distribution $\tilde{X}$ is given by:
\begin{equation}
    v_\alpha \triangleq \text{VaR}_\alpha(\tilde{X}) = \exp(\mu + \sigma y_\alpha), \quad \text{where} \quad y_\alpha = T_\nu^{-1}(\alpha).
\end{equation}

By the definition of conditional expectation:
\begin{equation}
    \text{CVaR}_\alpha(\tilde{X}) = \frac{\mathbb{E}[\tilde{X} \cdot \mathbf{1}_{\{\tilde{X} \geq v_\alpha\}}]}{\mathbb{P}(\tilde{X} \geq v_\alpha)}.
\end{equation}

The denominator equals $1 - \alpha$ since $v_\alpha < x_{\max}$. For the numerator, the event $\{\tilde{X} \geq v_\alpha\}$ comprises two cases: (i) $v_\alpha \leq X < x_{\max}$, where $\tilde{X} = X$, and (ii) $X \geq x_{\max}$, where $\tilde{X} = x_{\max}$. Thus:
\begin{equation}
    \mathbb{E}[\tilde{X} \cdot \mathbf{1}_{\{\tilde{X} \geq v_\alpha\}}] = \underbrace{\int_{v_\alpha}^{x_{\max}} x \cdot f_X(x) \, dx}_{J_1} + \underbrace{x_{\max} \cdot \mathbb{P}(X \geq x_{\max})}_{J_2}.
\end{equation}

Following the same substitution as in Section~\ref{appendix:censored-expectation}, we obtain:
\begin{equation}
    J_1 = \int_{y_\alpha}^{y_{\max}} \exp(\mu + \sigma y) \cdot t_\nu(y) \, dy = \Psi(y_{\max}) - \Psi(y_\alpha), \quad J_2 = x_{\max} \cdot [1 - T_\nu(y_{\max})].
\end{equation}

Combining these results yields:
\begin{equation}
    \text{CVaR}_\alpha(\tilde{X}) = \frac{1}{1-\alpha} \left[ \Psi(y_{\max}) - \Psi(y_\alpha) + x_{\max} \cdot [1 - T_\nu(y_{\max})] \right].
\end{equation}
This result establishes Eq.~\eqref{eq:truncated-cvar} in Section~\ref{sec:scheduling_strategy}.

\section{Experiment Details}
\label{app:experiment_details}

\subsection{Baseline Details}
\label{app:baseline_details}
Below are the details of the baseline scheduling strategies compared in our evaluation:
\begin{itemize}
    \item \textbf{FCFS (First-Come-First-Served):} The default scheduling policy used in systems like vLLM, which processes requests strictly in arrival order. While ensuring fairness, FCFS suffers from Head-Of-Line (HOL) blocking, where long-running requests delay subsequent shorter ones.
    \item \textbf{SSJF (Speculative Shortest-Job-First)~\cite{ssjf-aiops2024qiu}:} A scheduling strategy that predicts output lengths using a lightweight proxy model (e.g., fine-tuned BERT) to enable SJF scheduling. It employs a regression head on the \texttt{[CLS]} token representation and prioritizes shorter requests to mitigate HOL blocking.
    \item \textbf{LTR (Learning-to-Rank)~\cite{ltr-fu2024efficient}:} A ranking-based scheduling policy that optimizes request ordering based on relative generation lengths rather than exact point estimates. It trains an auxiliary predictor (e.g., OPT-125M) using the ListMLE loss to predict ranking scores that approximate the ideal SJF ordering, with Kendall's Tau used to evaluate ranking quality.
\end{itemize}

\begin{figure*}[!t]
    \centering
    \begin{subfigure}[t]{0.32\textwidth}
        \centering
        \includegraphics[width=\textwidth]{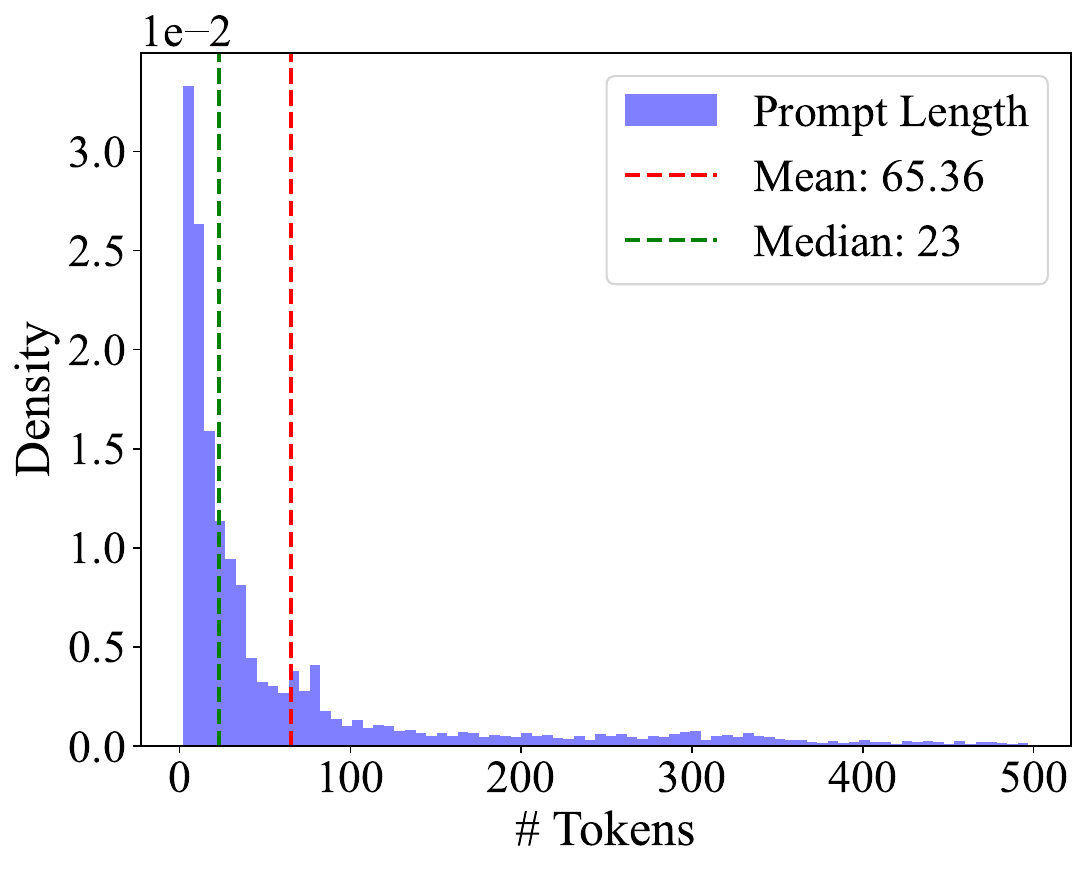}
        \label{fig:lmsys-prompt-length}
    \end{subfigure}
    \hfill
    \begin{subfigure}[t]{0.32\textwidth}
        \centering
        \includegraphics[width=\textwidth]{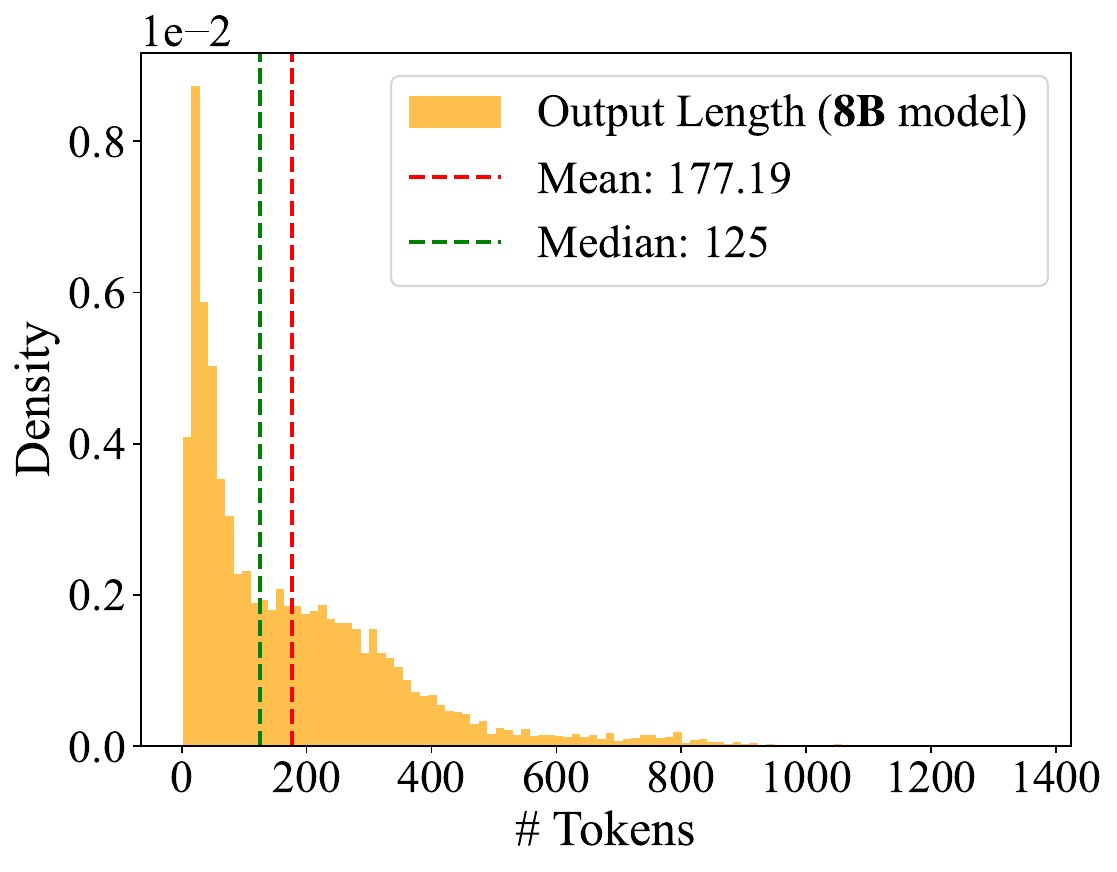}
        \caption{Dataset: LMSYS-Chat-1M}
        \label{fig:lmsys-8B}
    \end{subfigure}
    \hfill
    \begin{subfigure}[t]{0.32\textwidth}
        \centering
        \includegraphics[width=\textwidth]{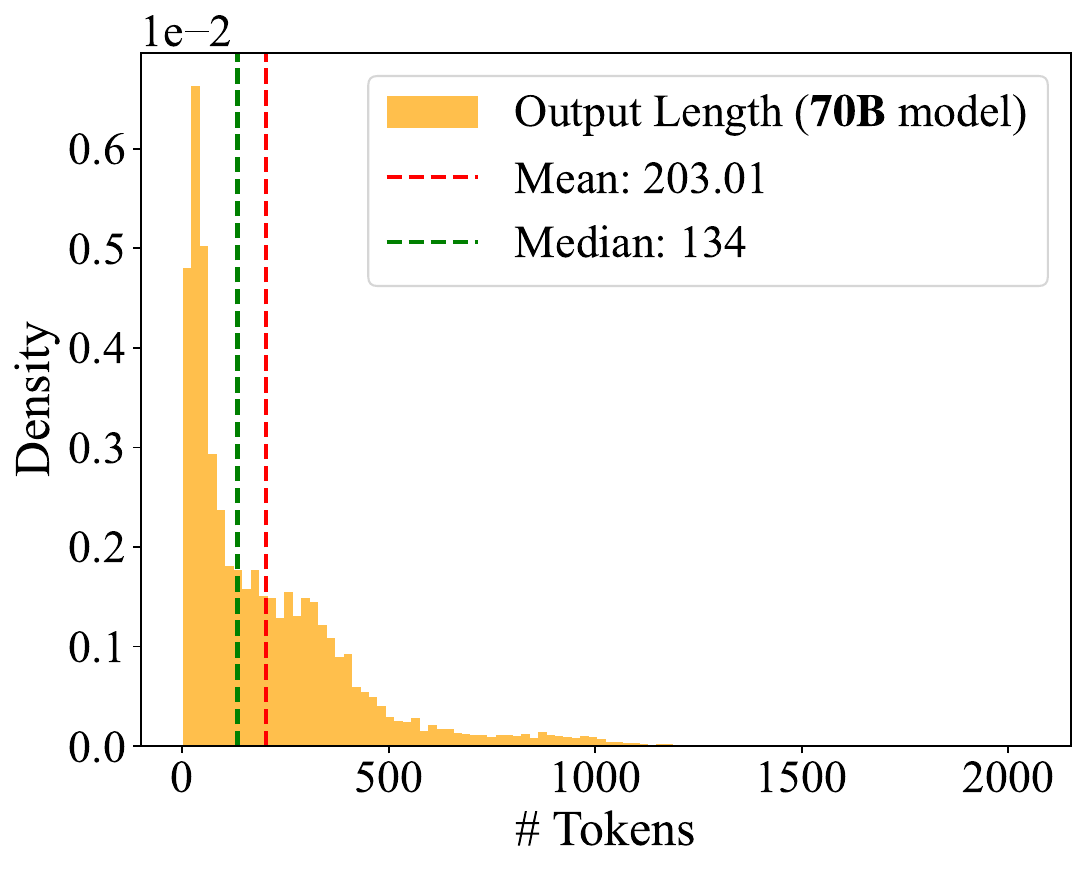}
        \label{fig:lmsys-70B}
    \end{subfigure}
    
    \vspace{1em}
    
    \begin{subfigure}[t]{0.32\textwidth}
        \centering
        \includegraphics[width=\textwidth]{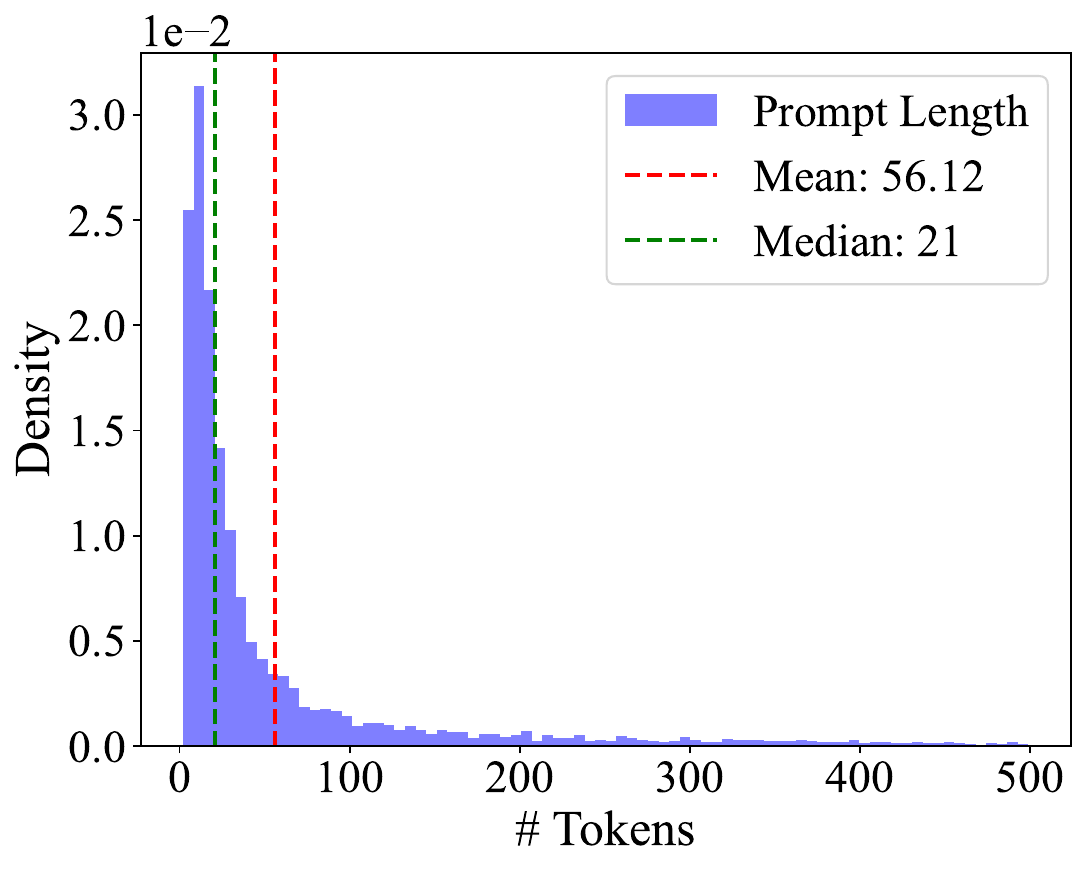}
        \label{fig:sharegpt-prompt-length}
    \end{subfigure}
    \hfill
    \begin{subfigure}[t]{0.32\textwidth}
        \centering
        \includegraphics[width=\textwidth]{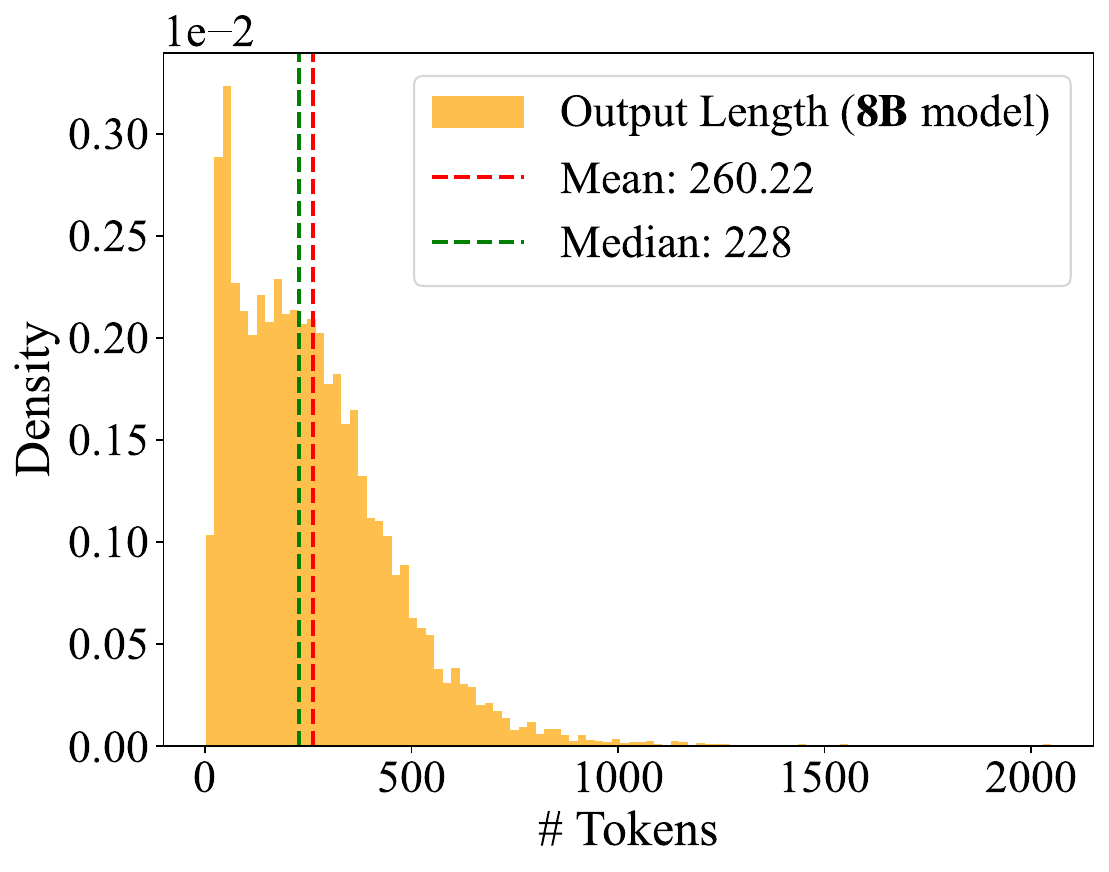}
        \caption{Dataset: ShareGPT}
        \label{fig:sharegpt-8B}
    \end{subfigure}
    \hfill
    \begin{subfigure}[t]{0.32\textwidth}
        \centering
        \includegraphics[width=\textwidth]{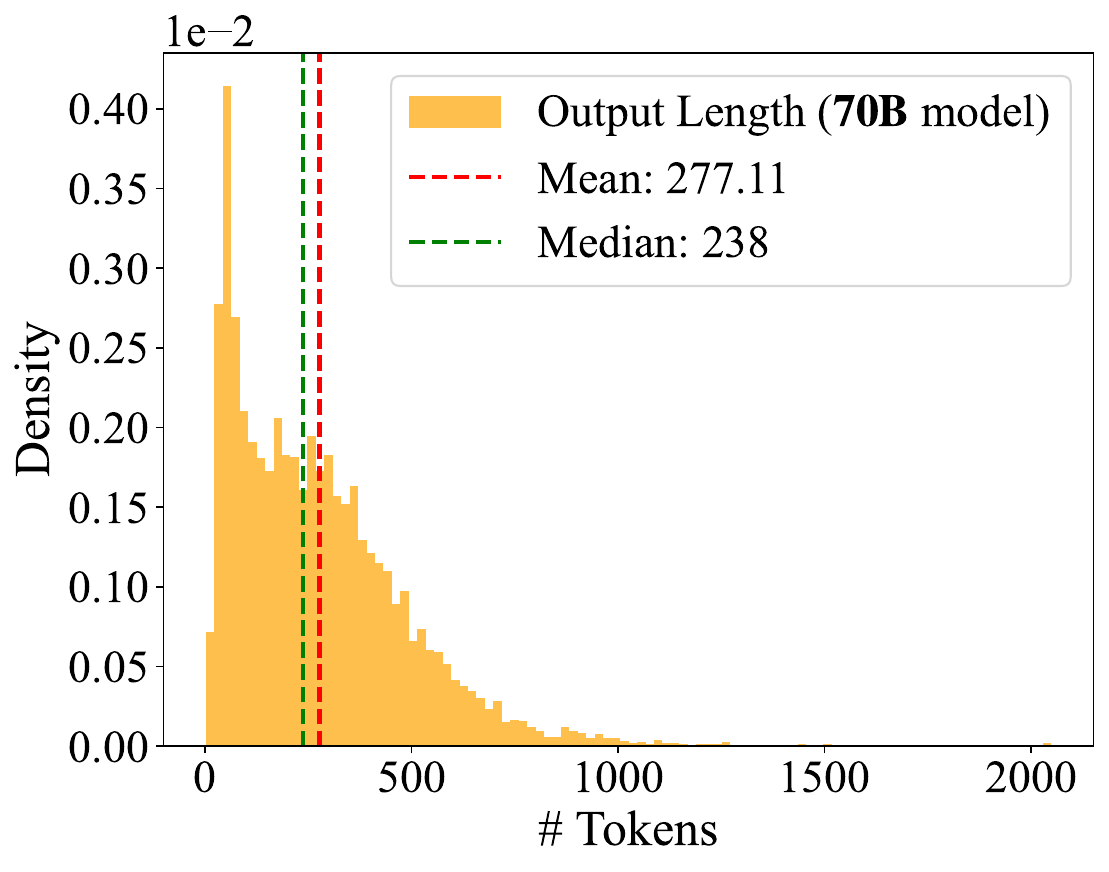}
        \label{fig:sharegpt-70B}
    \end{subfigure}
    
    \vspace{1em}
    
    \begin{subfigure}[t]{0.32\textwidth}
        \centering
        \includegraphics[width=\textwidth]{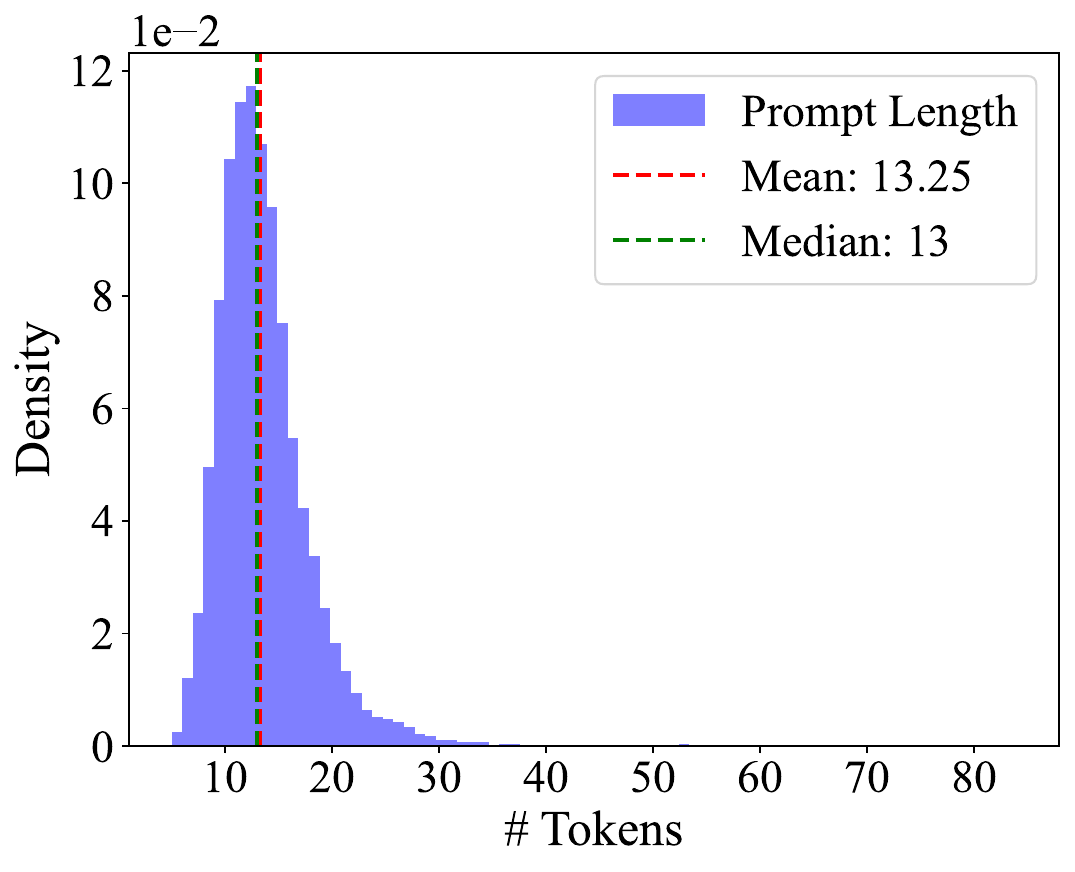}
        \label{fig:alpaca-prompt-length}
    \end{subfigure}
    \hfill
    \begin{subfigure}[t]{0.32\textwidth}
        \centering
        \includegraphics[width=\textwidth]{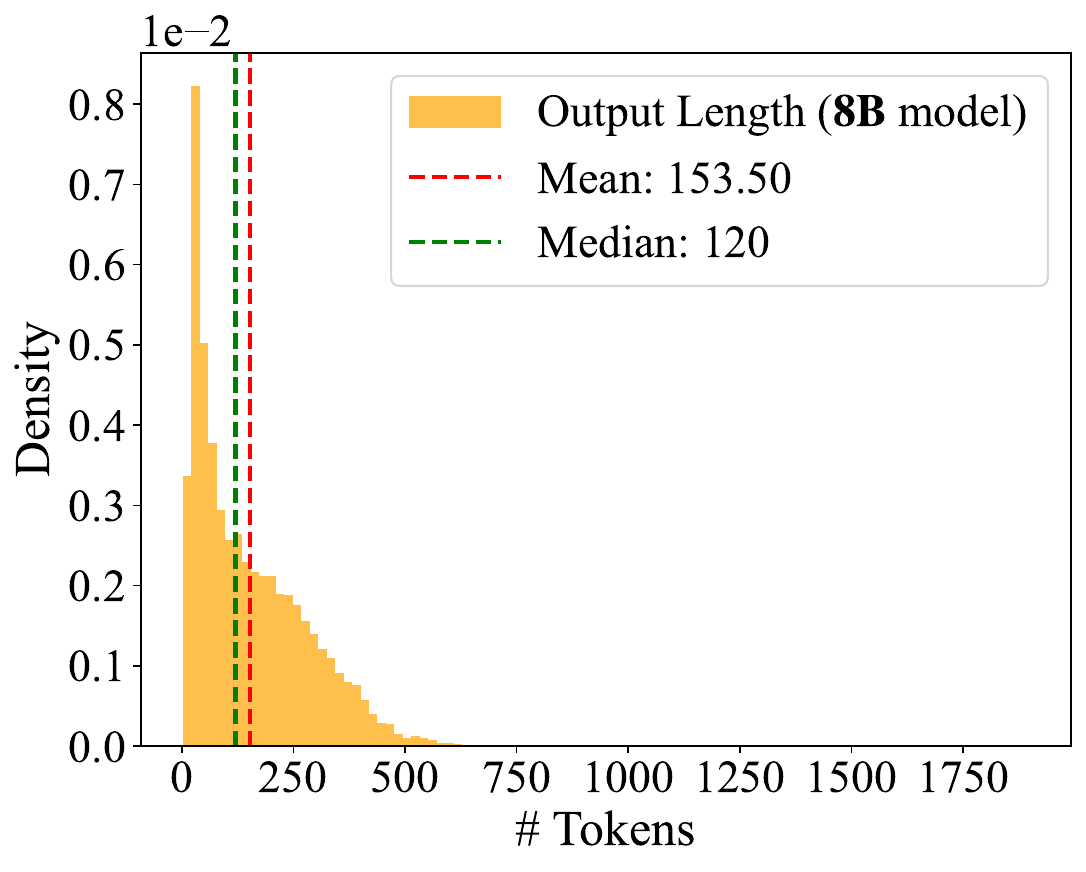}
        \caption{Dataset: Alpaca}
        \label{fig:alpaca-8B}
    \end{subfigure}
    \hfill
    \begin{subfigure}[t]{0.32\textwidth}
        \centering
        \includegraphics[width=\textwidth]{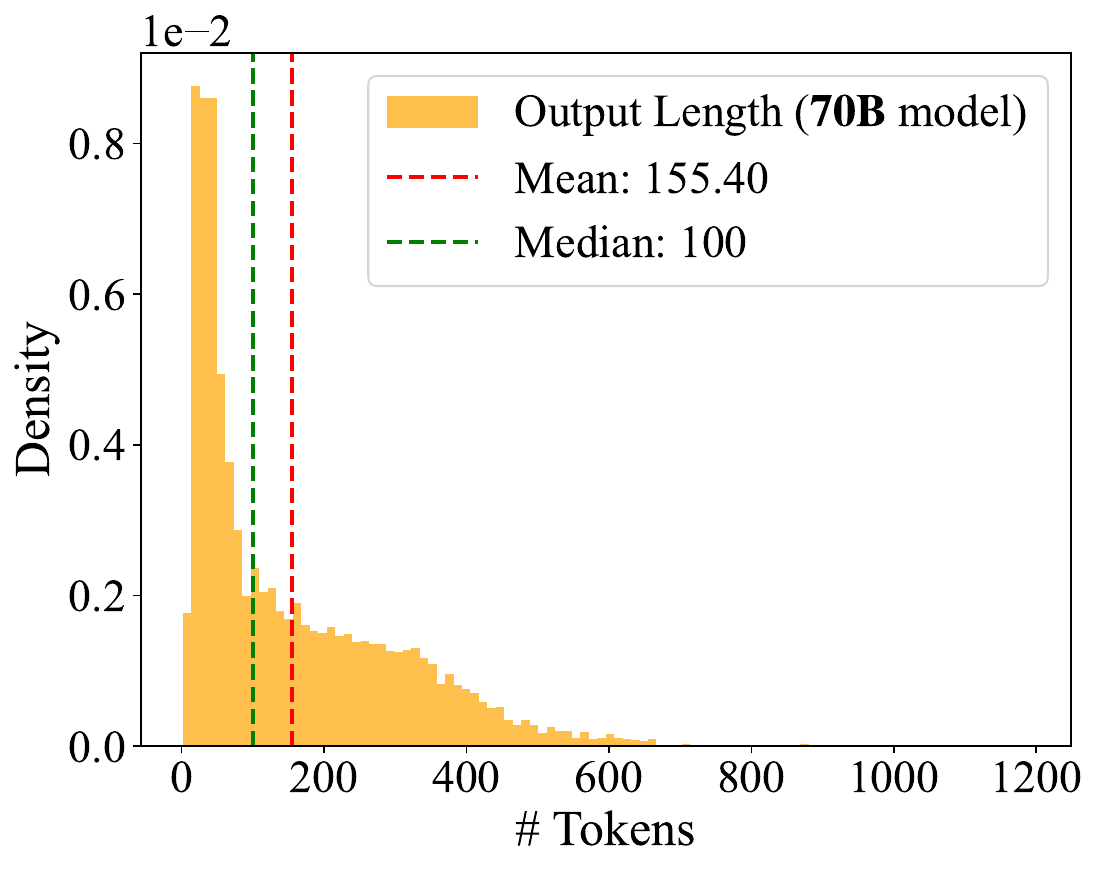}
        \label{fig:alpaca-70B}
    \end{subfigure}
    
    \caption{Distributions of prompt lengths across datasets (tokenized using Meta-Llama-3-8B-Instruct) and output lengths across models.}
    \label{fig:dataset_length_distribution}
\end{figure*}

\subsection{Dataset Details}
\label{app:dataset_details}

We utilize three distinct datasets to evaluate performance across diverse 
workloads, including both real-world conversations and synthetic data:
\begin{itemize}
    \item \textbf{LMSYS-Chat-1M~\cite{zheng2023lmsys}:} A large-scale real-world LLM conversation dataset collected from the Chatbot Arena. It contains approximately one million conversations involving over 25 different large language models, capturing a wide distribution of user prompts and varying response lengths representative of open-domain chatbot traffic.

    \item \textbf{ShareGPT~\cite{sharegpt52k}:} A dataset consisting of authentic conversations shared by users from their interactions with ChatGPT. This dataset reflects diverse real-world queries with varying intent and structure, and is widely used to benchmark LLM performance.

    \item \textbf{Alpaca~\cite{alpaca}:} A synthetic dataset generated via the Self-Instruct framework using GPT-3.5. It contains 52,000 instruction-following examples covering a broad range of tasks. Due to its synthetic nature and diversity, it serves as a representative benchmark for Synthetic Data Generation (SDG) workloads.
\end{itemize}

Figure \ref{fig:dataset_length_distribution} illustrates the distributions of prompt lengths across these datasets and output lengths on Meta-Llama-3-8B-Instruct and Meta-Llama-3-70B-Instruct (10K samples per dataset).

For prompt lengths, all three datasets exhibit right-skewed distributions with long tails. LMSYS-Chat-1M and ShareGPT show similar patterns, where the mean is approximately three times the median (65.36 vs. 23 and 56.12 vs. 21, respectively), indicating the presence of some exceptionally long prompts. 
In contrast, Alpaca has considerably shorter prompts with a more concentrated distribution, as reflected in its similar mean and median (13.25 vs 13). This discrepancy arises because Alpaca consists of concise instructional tasks, whereas LMSYS-Chat-1M and ShareGPT contain real-world user queries.

For output lengths, all datasets exhibit pronounced right-skewed distributions with heavy tails, where the mean consistently exceeds the median. Notably, the 70B model tends to produce slightly longer outputs than the 8B model across all datasets, with the most pronounced difference observed on LMSYS-Chat-1M (mean of 203.01 vs 177.19). Among the three datasets, ShareGPT yields the longest outputs, while Alpaca produces the shortest.

These observations demonstrate that SDG tasks and Chatbot workloads are fundamentally different in nature, justifying the need to evaluate scheduling strategies under both scenarios separately. 
Moreover, they highlight a key challenge in LLM inference scheduling: the high variance and heavy-tailed nature of output lengths imply that a small fraction of long-running requests can significantly degrade system throughput and tail latency while blocking subsequent requests~\cite{jiang2017heterogeneity}. This requires schedulers that can accurately identify such requests to ensure Quality of Service (QoS) and system stability~\cite{hu2019dynamic}.

\section{Additional Experiments}
\label{app:more_experiments}

\subsection{Results on Additional Models}
\label{app:more_models}

In Section~\ref{sec:experiments}, we evaluated scheduling strategies on Meta-Llama-3-8B-Instruct and Meta-Llama-3-70B-Instruct. In this appendix, we evaluate the performance of these strategies on seven additional models from different families to assess their generalizability: GPT-oss-20B~\cite{openai2025gptoss120bgptoss20bmodel}, Qwen3-30B-A3B-Instruct-2507~\cite{qwen3technicalreport}, Qwen3-Next-80B-A3B-Instruct~\cite{qwen3technicalreport,qwen2.5-1m}, DeepSeek-R1-Distill-Qwen-32B~\cite{deepseekai2025deepseekr1incentivizingreasoningcapability}, Mistral-7B-Instruct-v0.3, Mixtral-8x7B-Instruct-v0.1, and OpenPangu-Embedded-7B~\cite{chen2025panguembeddedefficientdualsystem}, among which Qwen3-30B-A3B-Instruct-2507, Qwen3-Next-80B-A3B-Instruct, and Mixtral-8x7B-Instruct-v0.1 employ MoE architectures. Unless otherwise specified, the experimental setup is identical to that described in Section~\ref{sec:experiment_settings}.

All models are served with FP16 precision using tensor parallelism. Specifically, Mistral-7B-Instruct-v0.3 and OpenPangu-Embedded-7B are deployed on a single GPU; GPT-oss-20B, Qwen3-30B-A3B-Instruct, and DeepSeek-R1-Distill-Qwen-32B are deployed on 2 GPUs; Mixtral-8x7B is deployed on 4 GPUs; and Qwen3-Next-80B-A3B is deployed on 8 GPUs. 

Table~\ref{tab:chatbot_results} presents the performance of different scheduling strategies at 100 RPS in the online chatbot scenario. Table~\ref{tab:app:sdg_results} presents the performance in offline SDG tasks. TIE consistently achieves strong performance across all settings, demonstrating its generalizability.

\begin{table*}[t]
\centering
\small
\caption{Performance comparison of scheduling strategies across different models under the Chatbot workload.}
\label{tab:chatbot_results}
\begin{tabular}{llcccccccc}
\toprule
\multirow{2}{*}[-0.5ex]{\textbf{Model}} & \multirow{2}{*}[-0.5ex]{\textbf{Dataset}} & \multicolumn{4}{c}{\textbf{Avg. PTLA (s/token)}$\downarrow$} & \multicolumn{4}{c}{\textbf{Avg. TTFT (s)}$\downarrow$} \\
\cmidrule(lr){3-6} \cmidrule(lr){7-10}
 & & FCFS & SSJF & LTR & \cellcolor{gray!20}TIE & FCFS & SSJF & LTR & \cellcolor{gray!20}TIE \\
\midrule
\multirow{3}{*}{GPT-oss-20B}
 & LMSYS-Chat-1M & 11.55 & 6.77 & 4.96 & \cellcolor{gray!20}3.00 & 188.33 & 167.68 & 149.46 & \cellcolor{gray!20}112.34 \\
 & ShareGPT & 3.83 & 3.03 & 1.97 & \cellcolor{gray!20}1.10 & 255.15 & 244.73 & 214.05 & \cellcolor{gray!20}171.65 \\
 & Alpaca & 3.91 & 2.19 & 2.06 & \cellcolor{gray!20}1.26 & 117.35 & 98.59 & 97.66 & \cellcolor{gray!20}71.79 \\
\midrule
\multirow{3}{*}{Qwen3-30B-A3B-Instruct-2507}
 & LMSYS-Chat-1M & 8.07 & 3.40 & 2.95 & \cellcolor{gray!20}1.59 & 150.18 & 109.49 & 96.26 & \cellcolor{gray!20}73.62 \\
 & ShareGPT & 3.76 & 1.75 & 1.50 & \cellcolor{gray!20}0.86 & 224.70 & 172.47 & 150.54 & \cellcolor{gray!20}116.26 \\
 & Alpaca & 2.58 & 1.14 & 0.82 & \cellcolor{gray!20}0.49 & 59.72 & 40.82 & 35.55 & \cellcolor{gray!20}29.88 \\
\midrule
\multirow{3}{*}{Qwen3-Next-80B-A3B-Instruct}
 & LMSYS-Chat-1M & 9.60 & 3.53 & 4.51 & \cellcolor{gray!20}2.86 & 201.17 & 141.52 & 141.35 & \cellcolor{gray!20}109.97 \\
 & ShareGPT & 4.19 & 2.17 & 1.97 & \cellcolor{gray!20}1.23 & 285.93 & 235.27 & 218.30 & \cellcolor{gray!20}183.12 \\
 & Alpaca & 3.62 & 2.24 & 1.46 & \cellcolor{gray!20}1.18 & 111.22 & 93.37 & 80.26 & \cellcolor{gray!20}67.77 \\
\midrule
\multirow{3}{*}{DeepSeek-R1-Distill-Qwen-32B}
 & LMSYS-Chat-1M & 2.71 & 1.86 & 1.98 & \cellcolor{gray!20}1.46 & 1057.02 & 855.83 & 852.08 & \cellcolor{gray!20}729.42 \\
 & ShareGPT & 2.42 & 1.87 & 1.82 & \cellcolor{gray!20}1.52 & 1443.49 & 1280.86 & 1268.46 & \cellcolor{gray!20}994.56 \\
 & Alpaca & 2.13 & 1.68 & 1.52 & \cellcolor{gray!20}1.25 & 890.31 & 777.19 & 743.32 & \cellcolor{gray!20}625.56 \\
\midrule
\multirow{3}{*}{Mistral-7B-Instruct-v0.3}
 & LMSYS-Chat-1M & 2.56 & 1.67 & 1.25 & \cellcolor{gray!20}0.71 & 127.67 & 100.72 & 88.21 & \cellcolor{gray!20}59.05 \\
 & ShareGPT & 1.19 & 0.82 & 0.79 & \cellcolor{gray!20}0.57 & 174.43 & 143.29 & 131.63 & \cellcolor{gray!20}97.71 \\
 & Alpaca & 1.34 & 0.94 & 0.81 & \cellcolor{gray!20}0.47 & 86.13 & 67.72 & 64.18 & \cellcolor{gray!20}45.04 \\
\midrule
\multirow{3}{*}{Mixtral-8x7B-Instruct-v0.1}
 & LMSYS-Chat-1M & 1.91 & 1.03 & 1.46 & \cellcolor{gray!20}0.80 & 101.86 & 78.15 & 85.02 & \cellcolor{gray!20}66.51 \\
 & ShareGPT & 1.24 & 0.75 & 0.90 & \cellcolor{gray!20}0.60 & 137.17 & 114.01 & 117.99 & \cellcolor{gray!20}94.13 \\
 & Alpaca & 1.31 & 0.88 & 0.71 & \cellcolor{gray!20}0.51 & 76.27 & 63.83 & 59.28 & \cellcolor{gray!20}42.90 \\
\midrule
\multirow{3}{*}{OpenPangu-Embedded-7B}
 & LMSYS-Chat-1M & 2.65 & 0.69 & 1.34 & \cellcolor{gray!20}0.47 & 163.53 & 106.97 & 113.38 & \cellcolor{gray!20}93.04 \\
 & ShareGPT & 1.64 & 0.57 & 0.75 & \cellcolor{gray!20}0.36 & 270.92 & 213.25 & 212.58 & \cellcolor{gray!20}190.33 \\
 & Alpaca & 1.46 & 0.47 & 0.55 & \cellcolor{gray!20}0.30 & 109.42 & 72.27 & 74.92 & \cellcolor{gray!20}63.47 \\
\bottomrule
\end{tabular}
\end{table*}

\begin{table*}[t]
\centering
\small
\caption{Performance comparison of scheduling strategies on the SDG task across different models, tested on Alpaca dataset.}
\label{tab:app:sdg_results}
\begin{tabular}{lcccccccc}
\toprule
\multirow{2}{*}[-0.5ex]{\textbf{Model}} & \multicolumn{4}{c}{\textbf{Time for 3k Samples (s)}$\downarrow$} & \multicolumn{4}{c}{\textbf{Throughput within 3 min}$\uparrow$} \\
\cmidrule(lr){2-5} \cmidrule(lr){6-9}
 & FCFS & SSJF & LTR & \cellcolor{gray!20}TIE & FCFS & SSJF & LTR & \cellcolor{gray!20}TIE \\
\midrule
GPT-oss-20B & 283.57 & 257.26 & 202.88 & \cellcolor{gray!20}165.56 & 1828 & 1957 & 2654 & \cellcolor{gray!20}3298 \\
Qwen3-30B-A3B-Instruct-2507 (MoE) & 176.27 & 152.69 & 106.53 & \cellcolor{gray!20}68.80 & 3045 & 3511 & 4563 & \cellcolor{gray!20}6419 \\
Qwen3-Next-80B-A3B-Instruct (MoE) & 271.22 & 233.51 & 156.85 & \cellcolor{gray!20}123.91 & 1907 & 2193 & 3379 & \cellcolor{gray!20}4316 \\
DeepSeek-R1-Distill-Qwen-32B & 2058.48 & 1584.06 & 1524.46 & \cellcolor{gray!20}1134.33 & 245 & 352 & 394 & \cellcolor{gray!20}533 \\
Mistral-7B-Instruct-v0.3 & 238.95 & 175.88 & 196.77 & \cellcolor{gray!20}123.07 & 2246 & 3077 & 2601 & \cellcolor{gray!20}4173 \\
Mixtral-8x7B-Instruct-v0.1 (MoE) & 259.16 & 201.36 & 144.94 & \cellcolor{gray!20}105.06 & 2048 & 2619 & 3660 & \cellcolor{gray!20}5020 \\
OpenPangu-Embedded-7B & 272.33 & 132.13 & 143.07 & \cellcolor{gray!20}97.60 & 1970 & 3850 & 3608 & \cellcolor{gray!20}4784 \\
\bottomrule
\end{tabular}
\end{table*}

\subsection{Sensitivity to Sample Size}
\label{app:sample_size}

We sample 1k prompts and establish a baseline by fitting distribution 
parameters using 100 repeated generations per prompt. We then vary the 
number of repetitions and compute the relative error of the estimated 
parameters against this baseline.

As shown in Figure~\ref{fig:repetitions}, the estimation error for $\mu$ 
remains low even with few repetitions, as the location parameter is 
primarily determined by the central tendency of the output length 
distribution, which stabilizes quickly. In contrast, the error in $\sigma$ 
decreases as the number of repetitions increases, since the scale 
parameter is sensitive to tail behavior and requires more samples to 
capture accurately.

Although increasing the number of repetitions yields more precise estimates, 
it significantly increases the cost of collecting training data for the 
predictor. We select 20 repetitions as a trade-off between estimation 
accuracy and data collection cost.

\begin{figure}[t]
  \begin{center}
    \centerline{\includegraphics[width=0.5\columnwidth]{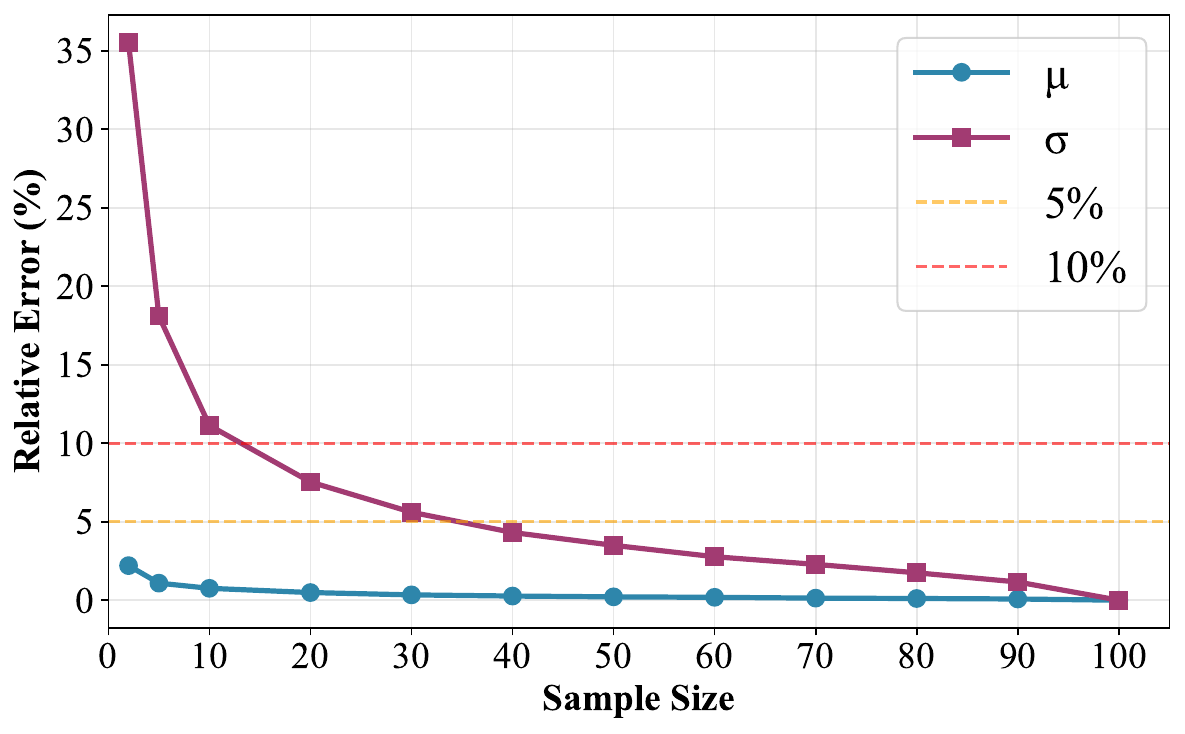}}
    \caption{Effect of the number of repetitions on parameter estimation accuracy. The relative error is computed against the baseline obtained with 100 repetitions per prompt.}
  \label{fig:repetitions}
  \end{center}
\end{figure}

\subsection{Sensitivity to Fixed $\nu$ value}
\label{app:nu_value}

\begin{figure}[t]
  \begin{center}
    \centerline{\includegraphics[width=0.5\columnwidth]{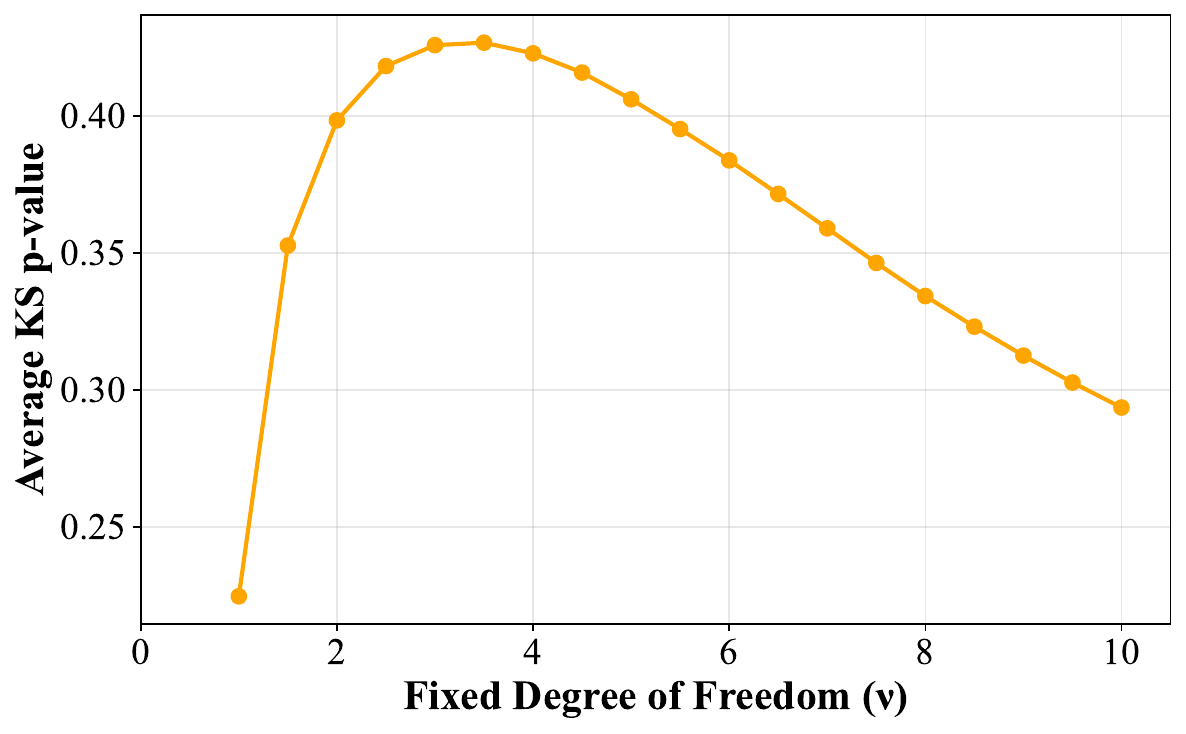}}
    \caption{Effect of the degrees of freedom parameter $\nu$ on fitting quality.}
  \label{fig:nu_value}
  \end{center}
\end{figure}

In practice, we use a fixed degree of freedom parameter $\nu$ for fitting. To determine this, we vary $\nu$ from 1 to 10, fit the distribution for each value, and compute the average p-value using the Kolmogorov-Smirnov (KS) test. As shown in Figure~\ref{fig:nu_value}, $\nu = 3.5$ achieves the best performance.

\subsection{Diagnostic Statistics by Prompt Family}
\label{app:prompt-family}

\begin{table*}[ht]
\centering
\caption{Diagnostic statistics by prompt family for the 1K$\times$100 prompt data (Section~3.1). Prompts classified using Qwen3-32B (thinking mode). Generation model: Llama-3-8B-Instruct. Dataset: LMSYS-Chat-1M. KS Pass: log-t test pass rate at $p>0.05$.}
\label{tab:prompt-family}
\begin{tabular}{llcccccc}
\toprule
\textbf{Type} & \textbf{Category} & \textbf{\#Prompt} & \textbf{Skewness} & \textbf{CV} & \textbf{P90/P50} & \textbf{P99/P50} & \textbf{KS Pass} \\
\midrule
\multirow{3}{*}{Structured} & Translation \& Reformulation  & 266 & 2.75 & 0.93 & 3.40  & 7.80  & \textbf{89.5\%} \\
                             & Factual QA \& Knowledge   & 292 & 3.54 & 1.22 & 4.63  & 11.87 & \textbf{93.5\%} \\
                             & Code \& Structured Output         & 158 & 2.79 & 0.87 & 2.99  & 6.86  & \textbf{94.4\%} \\
\midrule
\multirow{2}{*}{Open-ended} & Creative \& Open-ended Writing     & 165 & 3.42 & 1.12 & 5.60  & 13.00 & \textbf{95.7\%} \\
                             & Conversation \& Roleplay & 82  & 2.68 & 1.35 & 7.50  & 16.50 & \textbf{95.2\%} \\
\midrule
Others   & & 37  & 3.00 & 1.45 & 9.50  & 17.50 & \textbf{94.0\%} \\
\midrule
\textbf{All} & & \textbf{1000} & \textbf{3.10} & \textbf{1.09} & \textbf{4.62} & \textbf{10.77} & \textbf{93.1\%} \\
\bottomrule
\end{tabular}
\end{table*}

To verify that the heavy-tailed behavior observed in Section~\ref{sec:distribution} 
is not specific to particular prompt types, we classify the $1\text{K}\times 100$ 
prompts into five categories using Qwen3-32B (thinking mode) and analyze their 
tail statistics. As shown in Table~\ref{tab:prompt-family}, all categories exhibit 
positive skewness, high P90/P50 ratios, and log-t KS pass rates above 89\%, 
confirming the universality of the heavy-tailed behavior. Open-ended categories 
exhibit slightly heavier tails than structured ones, consistent with their weaker 
generation constraints.

\subsection{Sensitivity to Training Data Scale}
\label{app:data-scale}

\begin{table}[ht]
\centering\small
\caption{Average per-token latency (s/token) at 100 RPS with different training data scales ($K$ prompts $\times$ $N$ generations each). Model: Llama-3-8B-Instruct. Dataset: LMSYS-Chat-1M.}
\label{tab:data-scale}
\begin{tabular}{lccc}
\toprule
Training Data & TIE (ours) & SSJF & LTR \\
\midrule
900K$\times$1  & --   & 1.95 & 1.55 \\
45K$\times$20  & \textbf{0.67} & 2.31 & 2.42 \\
20K$\times$20  & 0.88 & --   & --   \\
20K$\times$10  & 1.07 & --   & --   \\
15K$\times$10  & 1.58 & --   & --   \\
10K$\times$10  & 2.26 & --   & --   \\
\bottomrule
\end{tabular}
\end{table}

We vary the predictor's training data scale ($K$ prompts $\times N$ generations 
each) to evaluate the data efficiency of TIE. As shown in Table~\ref{tab:data-scale}, 
with only $15\text{K}\times 10 = 150\text{K}$ generations (about $1/6$ of the 
$900\text{K}$ used by SSJF/LTR), TIE already outperforms SSJF and approaches LTR; 
with the default $45\text{K}\times 20$, TIE consistently outperforms both 
baselines. The multi-generation training cost is incurred only once and can be 
amortized over the deployment lifetime of an LLM service.

\subsection{Sensitivity to Sampling Temperature}
\label{app:temperature}

\begin{table}[ht]
\centering
\caption{Average per-token latency (s/token) at 100 RPS. All schedulers trained at temperature$=$0.7, evaluated at varying temperatures. Model: Llama-3-8B-Instruct. Dataset: LMSYS-Chat-1M.}
\label{tab:temperature}
\begin{tabular}{l|ccc}
      \toprule
      Temp. & TIE (ours) & SSJF & LTR \\
      \midrule
      0.4                  & 0.84 & 1.98 & 1.63 \\
      0.7 (as in training) & 0.67 & 1.95 & 1.55 \\
      1.0                  & 0.73 & 2.02 & 1.70 \\
      1.3                  & 1.13 & 2.59 & 2.62 \\
      1.6                  & 2.61 & 4.42 & 5.77 \\
      \bottomrule
    \end{tabular}
\end{table}

To assess generalization across decoding configurations, we train all schedulers 
at a temperature of 0.7 and evaluate them at temperatures ranging from 0.4 to 1.6. 
As shown in Table~\ref{tab:temperature}, TIE consistently outperforms SSJF and 
LTR across all temperature settings. At extreme temperatures ($\geq 1.6$), all 
methods degrade due to severely increased output uncertainty, but such settings 
are rarely used in practice as they produce low-quality outputs~\cite{nguyen2025turning}.

\subsection{Effectiveness of Adaptive $\beta$}
\label{app:adaptive-beta}

\begin{table}[ht]
\centering
\caption{Average per-token latency (s/token) with adaptive vs.\ fixed $\beta$ across request rates. Model: Llama-3-8B-Instruct. Dataset: LMSYS-Chat-1M. \textbf{Bold}: adaptive $\beta$; \underline{underlined}: best fixed $\beta$ per column.}
\begin{tabular}{lccccc}
\toprule
$\beta$ & 10 RPS & 30 RPS & 60 RPS & 100 RPS & 200 RPS \\
\midrule
\textbf{Adaptive} & \textbf{0.05} & \textbf{0.18} & \textbf{0.41} & \textbf{0.67} & \textbf{1.88} \\
\midrule
0.1 & \underline{0.05} & 0.23 & 0.49 & 0.72 & 2.17 \\
0.3 & 0.07 & \underline{0.20} & \underline{0.44} & 0.71 & 2.09 \\
0.5 & 0.09 & 0.25 & 0.46 & \underline{0.70} & \underline{1.96} \\
0.7 & 0.12 & 0.29 & 0.48 & 0.74 & 2.05 \\
\bottomrule
\label{tab:adaptive-beta}
\end{tabular}
\end{table}

To validate the adaptive $\beta$ design (Eq.~\ref{equ:beta}), we compare it 
against fixed $\beta$ values across a range of request rates. As shown in 
Table~\ref{tab:adaptive-beta}, the best fixed $\beta$ shifts with load: 0.1 at 
low RPS, 0.3 at moderate RPS, and 0.5 at high RPS. Since real-world workloads 
fluctuate over time, no single fixed $\beta$ is optimal. The adaptive scheme 
achieves the best or near-best performance at every load level by dynamically 
responding to system pressure.

\subsection{Comparison with Preemptive Scheduling}
\label{app:vs-trail}

\begin{table}[ht]
\centering
\caption{Average per-token latency (s/token): TIE vs.\ TRAIL across RPS. Model: Llama-3-8B-Instruct. Dataset: LMSYS-Chat-1M.}
\label{tab:tie-vs-trail}
\begin{tabular}{lcccccc}
\toprule
RPS & 10 & 30 & 40 & 60 & 80 & 100 \\
\midrule
TIE (ours) & \textbf{0.05} & 0.18 & 0.25 & \textbf{0.41} & \textbf{0.54} & \textbf{0.67} \\
TRAIL       & 0.06 & \textbf{0.15} & \textbf{0.23} & 0.57 & 0.87 & 1.41 \\
\bottomrule
\end{tabular}
\end{table}

We adopt TRAIL~\cite{dont-stop-shahoutdon}, a \textit{preemptive} scheduler that 
re-predicts after every generated token, as an additional baseline. As shown 
in Table~\ref{tab:tie-vs-trail}, the two methods are comparable at low load 
($\leq 40$ RPS), with TRAIL holding a slight edge. At moderate and high load 
($\geq 60$ RPS), TRAIL degrades sharply as frequent preemptions cause KV-cache 
evictions and re-prefill overhead. TIE reduces mis-ranking probability at the 
source through distributional modeling and risk-aware scoring, thereby avoiding 
the overhead of frequent preemption.

\subsection{Robustness to $\sigma$ Prediction Quality}
\label{app:sigma-noise}

\begin{table}[ht]
\centering
\caption{Average per-token latency (s/token) under varying $\sigma$ noise levels. $\sigma_{\text{noise}}$: range of multiplicative noise applied to predicted $\sigma$, i.e., the scheduler operates with $(1 \pm \sigma_{\text{noise}}) \times$ predicted $\sigma$. Model: Llama-3-8B-Instruct. Dataset: LMSYS-Chat-1M.}
\label{tab:sigma-noise}
\begin{tabular}{l|ccccc|cc}
\toprule
RPS & TIE ($\sigma_{\text{noise}}$=0) & TIE (0.1) & TIE (0.3) & TIE (0.5) & TIE (0.7) & SSJF & LTR \\
\midrule
60  & \textbf{0.41} & 0.43 & 0.49 & 0.55 & 0.60 & 0.91 & 0.76 \\
100 & \textbf{0.67} & 0.77 & 1.05 & 1.34 & 1.66 & 1.95 & 1.55 \\
\bottomrule
\end{tabular}
\end{table}

We examine TIE's robustness to $\sigma$ prediction errors by multiplying 
the predicted $\sigma$ with $(1 \pm \sigma_{\text{noise}})$. As shown in 
Table~\ref{tab:sigma-noise}, performance degrades gracefully with noise level. 
TIE remains competitive with both SSJF and LTR even under $\sigma_{\text{noise}}=0.7$, 
indicating that the scheduler is not brittle to imperfect $\sigma$ estimation.

\subsection{Full Heatmaps for SDG Tasks}
\label{app:sdg_heatmap}

\begin{figure*}[t]
    \centering
    
    \begin{subfigure}[t]{\textwidth}
        \centering
        \vspace{0pt}  %
        \includegraphics[height=3.8cm]{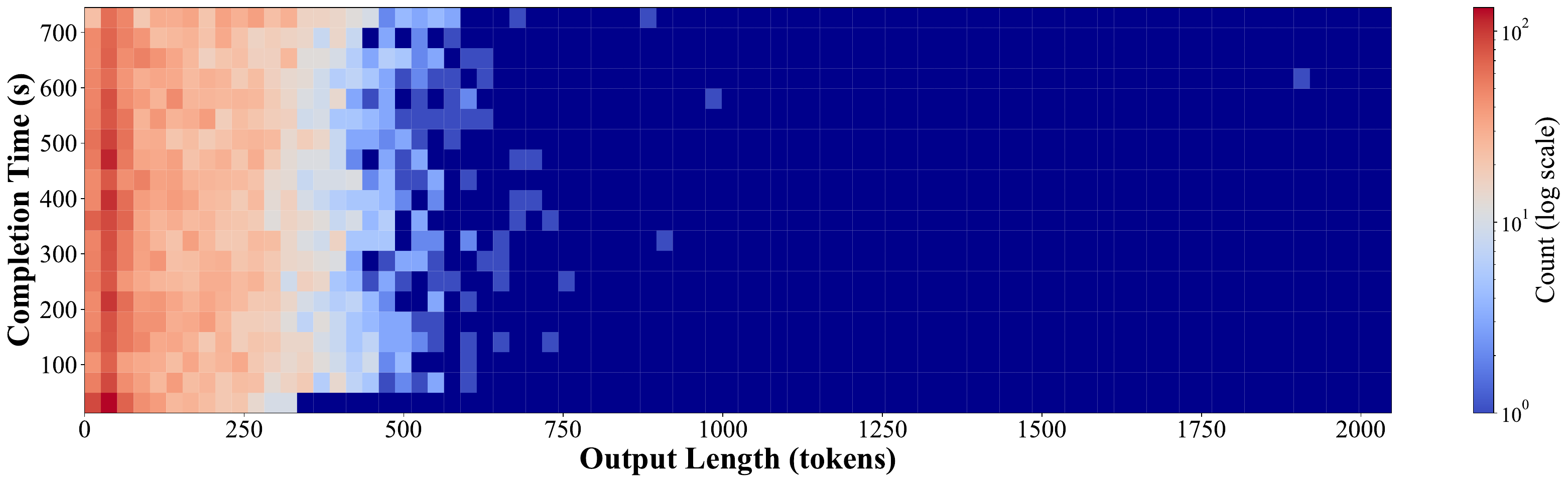}
        \caption{FCFS}
    \end{subfigure}
    \begin{subfigure}[t]{\textwidth}
        \centering
        \vspace{0pt}  %
        \includegraphics[height=3.8cm]{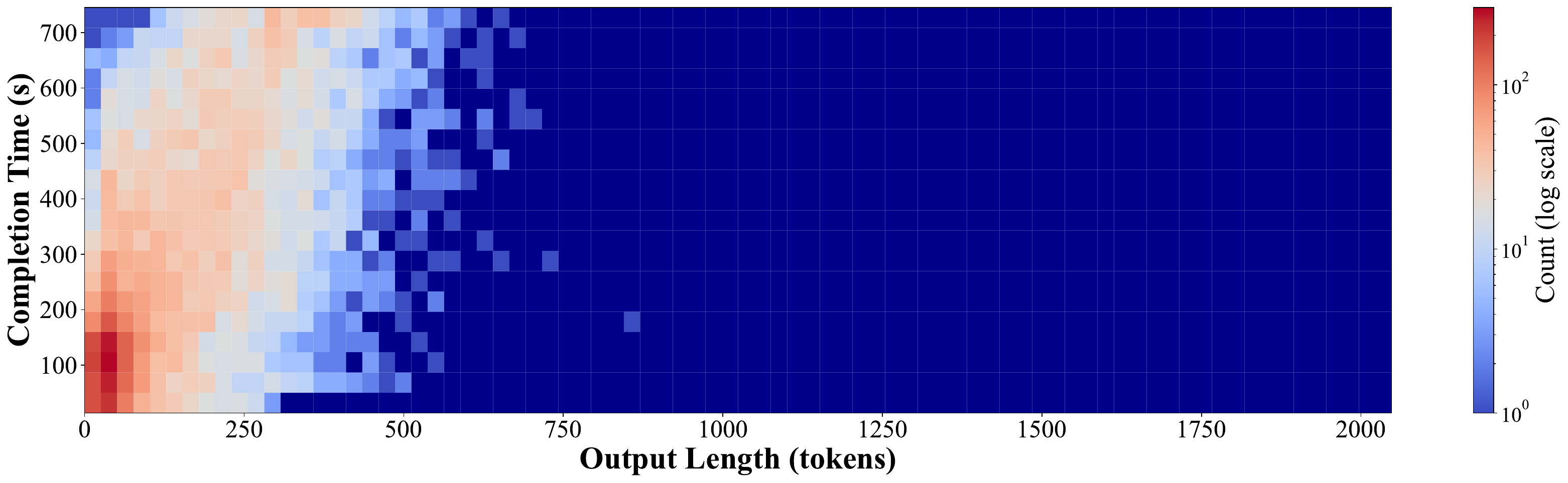}
        \caption{SSJF}
    \end{subfigure}
    \begin{subfigure}[t]{\textwidth}
        \centering
        \vspace{0pt}  %
        \includegraphics[height=3.8cm]{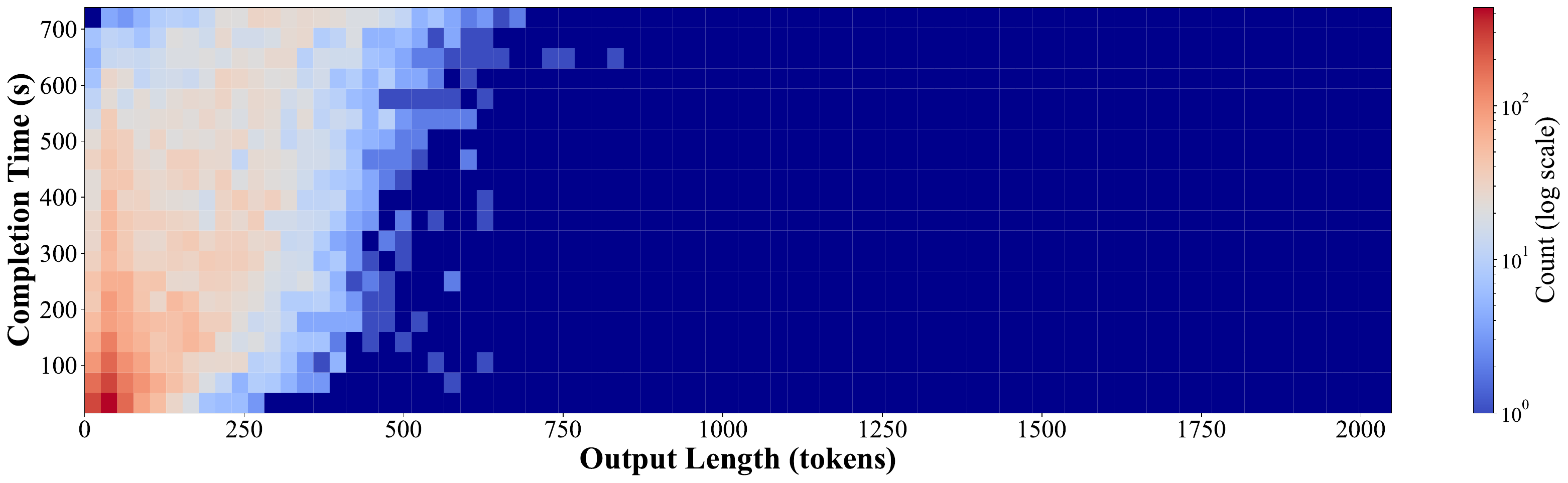}
        \caption{LTR}
    \end{subfigure}
    \begin{subfigure}[t]{\textwidth}
        \centering
        \vspace{0pt}  %
        \includegraphics[height=3.8cm]{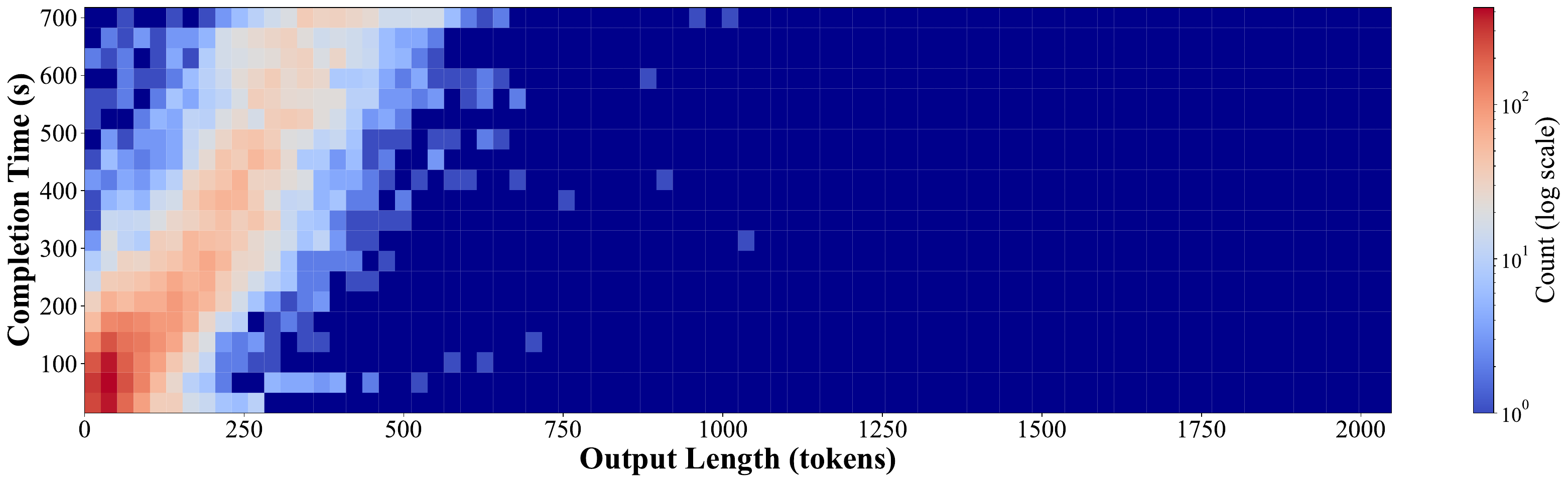}
        \caption{TIE (ours)}
    \end{subfigure}
    \caption{Full heatmaps of completion time versus output length under different strategies on the Alpaca dataset with the \underline{8B} model.}
    \label{fig:app_8B_sdg_heatmap}
\end{figure*}

\begin{figure*}[t]
    \centering
    
    \begin{subfigure}[t]{\textwidth}
        \centering
        \vspace{0pt}  %
        \includegraphics[height=3.8cm]{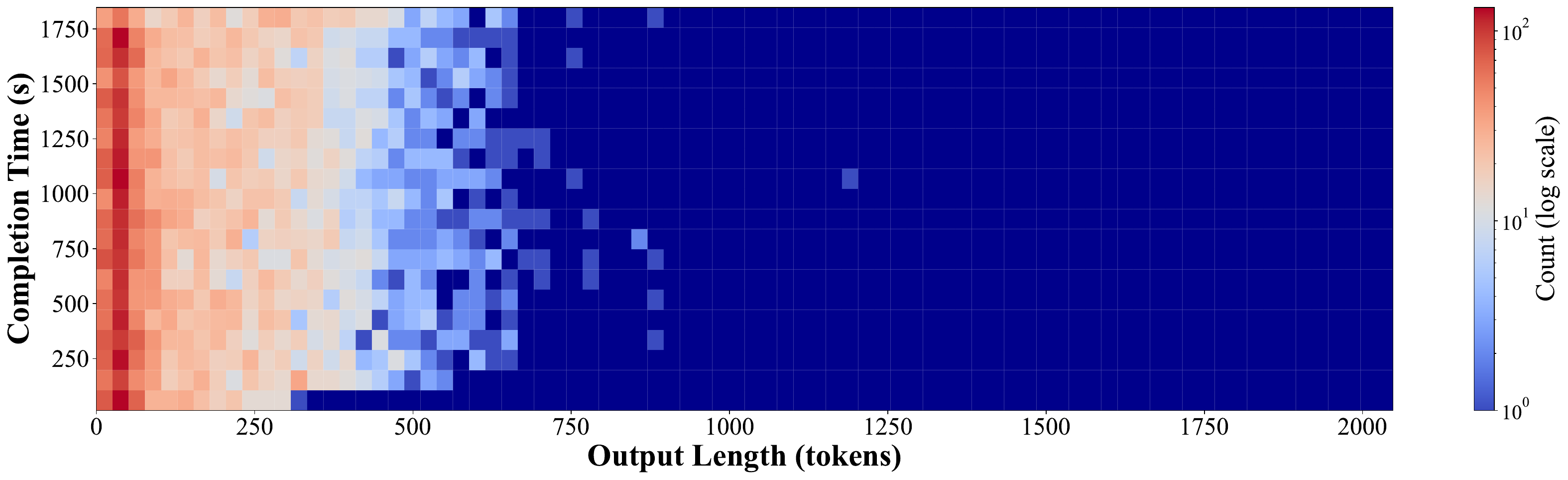}
        \caption{FCFS}
    \end{subfigure}
    \begin{subfigure}[t]{\textwidth}
        \centering
        \vspace{0pt}  %
        \includegraphics[height=3.8cm]{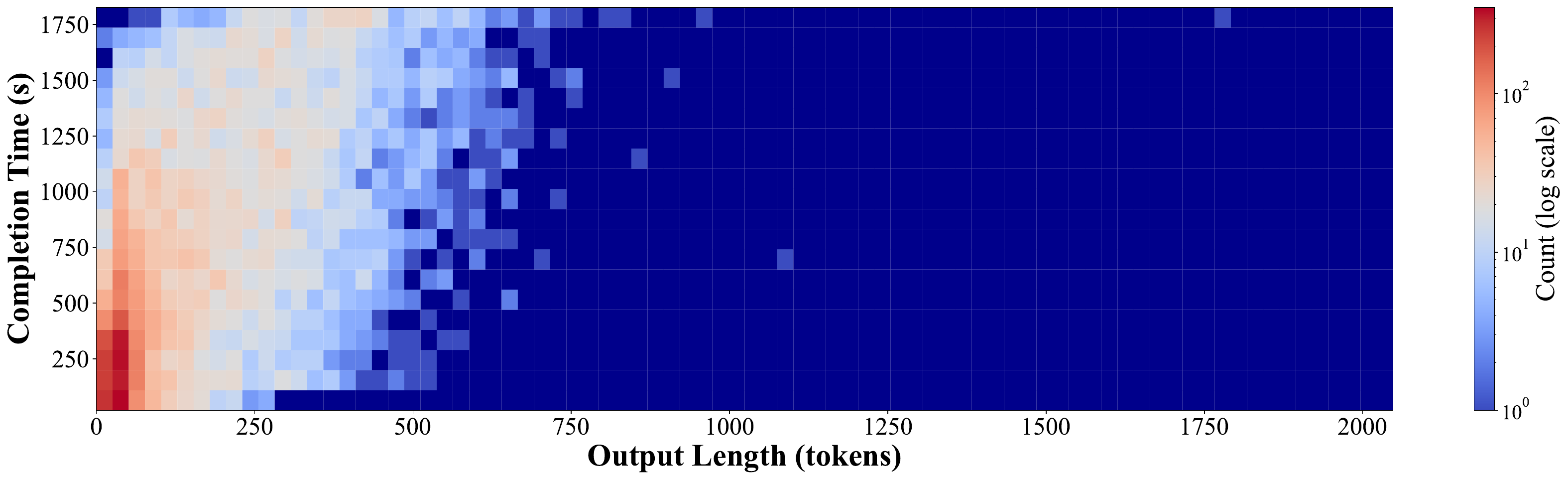}
        \caption{SSJF}
    \end{subfigure}
    \begin{subfigure}[t]{\textwidth}
        \centering
        \vspace{0pt}  %
        \includegraphics[height=3.8cm]{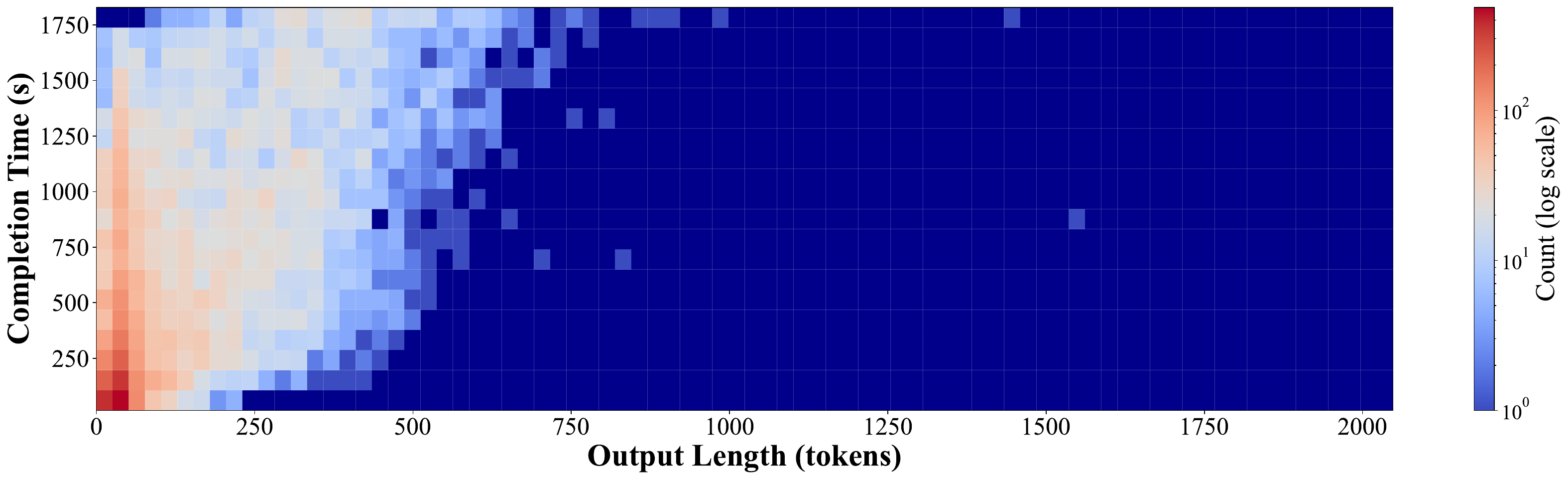}
        \caption{LTR}
    \end{subfigure}
    \begin{subfigure}[t]{\textwidth}
        \centering
        \vspace{0pt}  %
        \includegraphics[height=3.8cm]{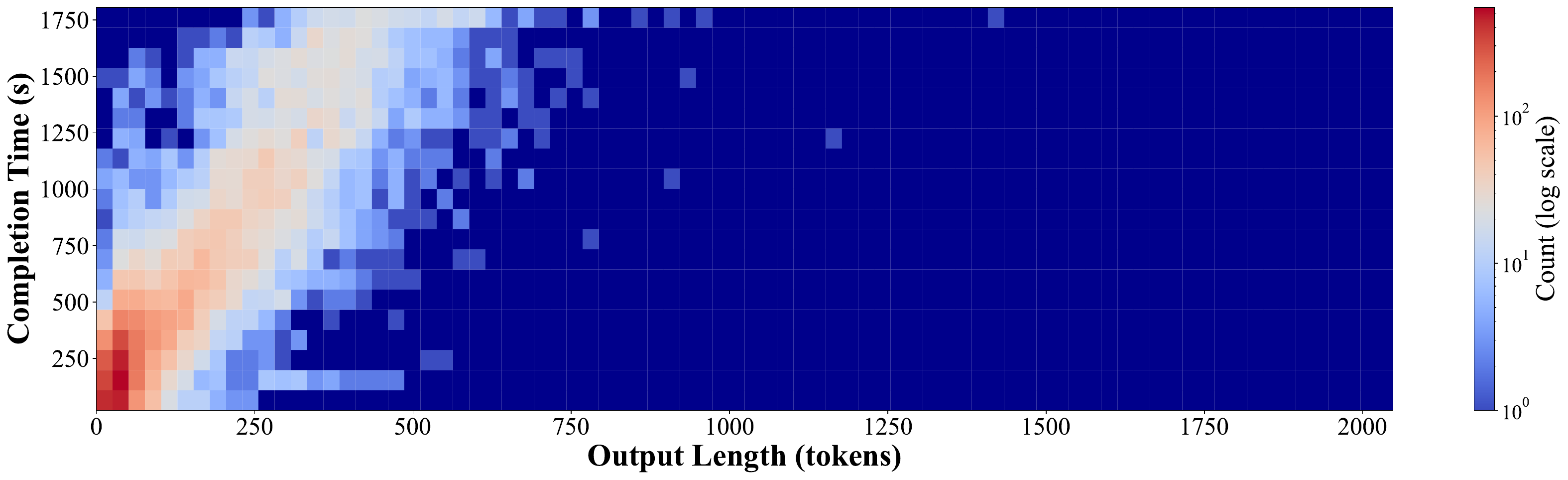}
        \caption{TIE (ours)}
    \end{subfigure}
    \caption{Full heatmaps of completion time versus output length under different strategies on the Alpaca dataset with the \underline{70B} model.}
    \label{fig:app_70B_sdg_heatmap}
\end{figure*}

In this section, we present the full heatmaps for the 8B model (Figure \ref{fig:app_8B_sdg_heatmap}) and the 70B model (Figure \ref{fig:app_70B_sdg_heatmap}), showing the test results on the Alpaca dataset. By considering the entire output length distribution, TIE leverages richer information and thus achieves more accurate predictions of possible output lengths. This is reflected in the figures as a tighter clustering of requests with similar output lengths.

\section{Future Work}
\label{app:future_work}

While TIE demonstrates strong performance in scheduling LLM inference requests, several promising directions remain for future exploration:

\begin{itemize}[leftmargin=*]

\item \textbf{Alternative Distribution Families.} 
This work adopts the log-t distribution to model output lengths, which effectively captures heavy-tailed characteristics. Future work could explore other distribution families, such as mixture models or non-parametric distributions, to better accommodate diverse output patterns across different applications and domains.

\item \textbf{Dynamic Distribution Updates During Decoding.} 
Currently, TIE predicts the output length distribution before decoding begins. An interesting direction is to dynamically update the distribution as tokens are generated, leveraging intermediate decoding states to refine predictions progressively. This could further improve scheduling decisions for requests with high output uncertainty.

\item \textbf{Online Adaptation for Training Data Collection.} 
A limitation of this work is that obtaining training data requires generating multiple responses for the same prompt to fit distribution parameters, making it difficult to leverage online serving data directly. Future work could explore online adaptation mechanisms, few-shot learning, or self-supervised approaches to alleviate this constraint and enable continuous model improvement during deployment.

\item \textbf{Extensions to Other Applications.} 
Beyond request scheduling, output length distribution prediction has potential applications in other aspects of LLM serving. For example, it could inform KV cache management by pre-allocating memory based on predicted distributions, or enable more accurate cost estimation for API pricing~\cite{jiang2021towards}. Adapting the distribution prediction framework to these scenarios presents an interesting avenue for future research.

\end{itemize}

\end{document}